\newcommand{\ind}{\mathds{1}}
\newcommand{\E}{\mathbb{E}}
\renewcommand{\P}{\mathbb{P}}
\newcommand{\De}{\Delta}
\newcommand{\var}{\mathrm{Var}}
\newcommand{\BB}{\mathcal{B}}
\newcommand{\tc}{T_\text{learn}}
\newcommand{\En}{\mathcal{E}_\text{good}}
\newtheorem{example}{Example}
\newtheorem{theorem}{Theorem}
\newtheorem{lemma}{Lemma}
\title{A Latent Source Model for \\ Online Collaborative Filtering}
\author{
Guy Bresler \qquad George H.~Chen \qquad Devavrat Shah \\
Department of Electrical Engineering and Computer Science \\
Massachusetts Institute of Technology\\
Cambridge, MA 02139 \\
\texttt{\{gbresler,georgehc,devavrat\}@mit.edu}
}
\begin{document}

\maketitle

\begin{abstract}
Despite the prevalence of collaborative filtering in recommendation systems,
there has been little theoretical development on why and how well it works,
especially in the ``online'' setting, where items are recommended to users
over time. We address this theoretical gap by introducing a model for online
recommendation systems, cast item recommendation under the model as a learning
problem, and analyze the performance of a cosine-similarity collaborative
filtering method. In our model, each of $n$ users either likes or dislikes
each of $m$ items. We assume there to be $k$ types of users, and all the users
of a given type share a common string of probabilities determining the chance
of liking each item. At each time step, we recommend an item to each user,
where a key distinction from related bandit literature is that once a user
consumes an item (e.g., watches a movie), then that item cannot be recommended
to the same user again. The goal is to maximize the number of likable items
recommended to users over time. Our main result establishes that after nearly
$\log(km)$ initial learning time steps, a simple collaborative filtering
algorithm achieves essentially optimal performance without knowing $k$. The
algorithm has an exploitation step that uses cosine similarity and
two types of exploration steps, one to explore the space of items (standard in
the literature) and the other to explore similarity between users (novel to
this work).
\end{abstract}

\section{Introduction}
Recommendation systems have become ubiquitous in our lives, helping us filter
the vast expanse of information we encounter into small selections tailored to
our personal tastes. Prominent examples include Amazon recommending items to
buy, Netflix recommending movies, and LinkedIn recommending jobs. 
In practice, recommendations are often made via \emph{collaborative
filtering}, which boils down to recommending an item to a user by considering
items that other similar or ``nearby'' users liked. Collaborative filtering
has been used extensively for decades now including in the \mbox{GroupLens} news
recommendation system \cite{grouplens_1994}, Amazon's item recommendation
system \cite{amazon_2003}, the Netflix Prize winning algorithm by
\mbox{BellKor's} Pragmatic Chaos
\cite{bellkor_2009,bigchaos_2009,pragmatic_2009}, and a recent song
recommendation system \cite{msd_challenge_winner_2013} that won the Million
Song Dataset Challenge \cite{msd_challenge_2011}.

Most such systems operate in the ``online" setting, where items are constantly
recommended to users over time. In many scenarios, it does not make sense to
recommend an item that is already consumed. For example, once Alice watches a
movie, there's little point to recommending the same movie to her again, at
least not immediately, and one could argue that recommending unwatched movies
and already watched movies could be handled as separate cases. Finally, what
matters is whether a {\em likable} item is recommended to a user rather than
an {\em unlikable} one. In short, a good online recommendation system should
recommend different likable items continually over time.

Despite the success of collaborative filtering, there has been little
theoretical development to justify its effectiveness in the online setting. We
address this theoretical gap with our two main contributions in this paper.
First, we frame online recommendation as a learning problem that fuses the
lines of work on sleeping bandits and clustered bandits. We impose the
constraint that once an item is consumed by a user, the system can't recommend
the item to the same user again. Our second main contribution is to analyze a
cosine-similarity collaborative filtering algorithm. The key insight is our
inclusion of two types of exploration in the algorithm: (1) the standard
random exploration for probing the space of items, and (2) a novel ``joint''
exploration for finding different user types. Under our learning problem
setup, after nearly $\log(km)$ initial time steps, the proposed algorithm
achieves near-optimal performance relative to an oracle
algorithm that recommends all likable items first. The nearly logarithmic
dependence is a result of using the two different exploration types.
We note that the algorithm does not know $k$.


\noindent{\bf Outline.} We present our model and learning problem for online
recommendation systems in Section \ref{sec:model}, provide a collaborative
filtering algorithm and its performance guarantee in Section \ref{sec:algo},
and give the proof idea for the performance guarantee in Section
\ref{sec:proof}. An overview of experimental results is given in Section
\ref{sec:experiments}. We discuss our work in the context of prior
work in Section \ref{sec:discussion}.


\section{A Model and Learning Problem for Online Recommendations}
\label{sec:model}


We consider a system with $n$ users and $m$ items. At each time step, each
user is recommended an item that she or he hasn't consumed yet, upon which,
for simplicity, we assume that the user immediately consumes the item and
rates it $+1$ (like) or $-1$ (dislike).%
\footnote{%
In practice, a user could ignore the recommendation. To keep our exposition
simple, however, we stick to this setting that resembles song recommendation
systems like Pandora that per user continually recommends a single item at a
time. For example, if a user rates a song as ``thumbs down'' then we assign a
rating of $-1$ (dislike), and any other action corresponds to $+1$ (like).
%
%
}
The reward earned by the recommendation system up to any time step is the
total number of liked items that have been recommended so far across all
users. Formally, index time by $t\in\{1,2,\dots\}$, and users by
$u\in[n]\triangleq\{1,\dots,n\}$. Let $\pi_{ut}\in[m]\triangleq\{1,\dots, m\}$
be the item recommended to user $u$ at time $t$. Let
$Y_{ui}^{(t)}\in\{-1,0,+1\}$ be the rating provided by user $u$ for item $i$
up to and including time $t$, where 0 indicates that no rating has been given
yet. A reasonable objective is to maximize the expected reward $r^{(T)}$ up to
time~$T$:
\begin{equation*}
r^{(T)}
\triangleq
  \sum_{t=1}^T\sum_{u=1}^n \E[Y_{u \pi_{ut}}^{(T)}]
= \sum_{i=1}^m\sum_{u=1}^n \E[Y_{u i}^{(T)}].
\end{equation*}
The ratings are noisy: the latent item preferences for user $u$ are
represented by a length-$m$ vector
$p_u \in [0,1]^m$, where user
$u$ likes item $i$ with probability $p_{ui}$, independently across items. For
a user $u$, we say that item $i$ is {\em likable} if $p_{ui} > 1/2$ and
{\em unlikable} if $p_{ui} < 1/2$. To maximize the expected reward $r^{(T)}$,
clearly likable items for the user should be recommended before unlikable
ones.

In this paper, we focus on recommending likable items. Thus, instead of
maximizing the expected reward $r^{(T)}$, we aim to maximize the expected
number of \emph{likable} items recommended up to time $T$:
\begin{equation}
\label{eq:reward+}
r_+^{(T)}
\triangleq
  \sum_{t=1}^T\sum_{u=1}^n \E[X_{ut}]\,,
\end{equation}
where $X_{ut}$ is the indicator random variable for whether the item
recommended to user $u$ at time $t$ is likable, i.e., $X_{ut} = +1$ if
$p_{u \pi_{ut}} > 1/2$ and $X_{ut}=0$ otherwise. Maximizing $r^{(T)}$ and
$r_+^{(T)}$ differ since the former asks that we prioritize items according
to their probability of being liked.

Recommending likable items for a user in
an arbitrary order is sufficient for many real recommendation systems such as
for movies and music. For example, we suspect that users wouldn't actually
prefer to listen to music starting from the songs that their user type would
like with highest probability
to the ones their user type would like with lowest probability; 
instead, each user would listen to
songs that she or he finds likable, ordered such that there is sufficient
diversity in the playlist to keep the user experience interesting. We target
the modest goal of merely recommending likable items, in any order.
Of course,
if all likable items have the same probability of being liked and similarly 
for all unlikable items, 
then maximizing $r^{(T)}$ and $r_+^{(T)}$ are equivalent. 


The fundamental challenge is that to learn about a user's preference for
an item, we need the user to rate (and thus consume) the item. But then we
cannot recommend that item to the user again! Thus, the only
way to learn about a user's preferences is through collaboration,
or inferring from other users' ratings. Broadly, such inference is possible if
the users preferences are somehow related.


In this paper, we assume a simple structure for shared user preferences. We
posit that there are $k<n$ different types of users, where users of the same
type have identical item preference vectors. The number of types $k$
represents the heterogeneity in the population. For ease of exposition, in
this paper we assume that a user belongs to each user type with probability
$1/k$. We refer to this model as a \emph{latent source model}, where each user
type corresponds to a latent source of users. We remark that there is evidence
suggesting real movie recommendation data to be well modeled by clustering of
both users and items \cite{bctf_2009}. Our model only assumes clustering over
users.

Our problem setup relates to some versions of the multi-armed bandit problem.
A fundamental difference between our setup and that of the standard stochastic
multi-armed bandit problem \cite{thompson_bandit_origin,regret_analysis_book}
is that the latter allows each item to be recommended an infinite number of
times. Thus, the solution concept for the stochastic multi-armed bandit
problem is to determine the best item (arm) and keep choosing it
\cite{finite_time_stochastic_multiarmed_bandits}. This observation applies
also to ``clustered bandits" \cite{clustered_bandits}, which like our work
seeks to capture collaboration between users. On the other hand, sleeping
bandits \cite{sleeping_bandits} allow for the available items at each time
step to vary, but the analysis is worst-case in terms of which items are
available over time. In our setup, the sequence of items that are available is
not adversarial. Our model combines the collaborative aspect of clustered
bandits with dynamic item availability from sleeping bandits, where we impose
a strict structure on how items become unavailable.

\section{A Collaborative Filtering Algorithm and Its Performance Guarantee}
\label{sec:algo}

This section presents our algorithm \textsc{Collaborative-Greedy} and its
accompanying theoretical performance guarantee. The algorithm is syntactically
similar to the $\varepsilon$-greedy algorithm for multi-armed bandits
\cite{reinforcement_learning_textbook}, which explores items with probability
$\varepsilon$ and otherwise greedily chooses the best item seen so far based
on a plurality vote. In our algorithm, the greedy choice, or exploitation,
uses the standard cosine-similarity measure. The exploration, on the other
hand, is split into two types, a standard item exploration in which a user is
recommended an item that she or he hasn't consumed yet uniformly at random,
and a joint exploration in which all users are asked to provide a rating for
the next item in a shared, randomly chosen sequence of items. Let's fill in
the details.
%
%

\begin{algorithm2e}[b]
\DontPrintSemicolon
\caption{\textsc{Collaborative-Greedy}%
         \label{alg:collaborative-greedy}}
\KwIn{Parameters $\theta \in [0,1]$, $\alpha \in (0,4/7]$.}
Select a random ordering $\sigma$ of the items $[m]$. Define
\[
\varepsilon_R(n) = \frac{1}{n^\alpha},
\qquad
\text{and}
\qquad
\varepsilon_J(t) = \frac{1}{t^\alpha}.
\]
\For{time step $t=1, 2, \dots, T$}{
With prob.~$\varepsilon_R(n)$: (\textbf{random exploration})
for each user, recommend a random item that the user has not rated.\;
With prob.~$\varepsilon_J(t)$: (\textbf{joint exploration})
for each user, recommend the first item in $\sigma$ that the user has not
rated.\;
With prob.~$1 - \varepsilon_J(t) - \varepsilon_R(n)$:
(\textbf{exploitation}) for each user $u$, recommend an item $j$ that the user
has not rated and that maximizes score $\widetilde{p}_{uj}^{(t)}$, which
depends on threshold $\theta$.\;
}
\end{algorithm2e}

\noindent\textbf{Algorithm.}
At each time step $t$, either all the users are asked to explore, or 
an item is recommended to each user by choosing the item with the highest 
score for that user. The pseudocode is described in Algorithm
\ref{alg:collaborative-greedy}. 
There are two types of exploration: \emph{random exploration}, which is for
exploring the space of items, and \emph{joint exploration}, which helps to
learn about similarity between users. For a pre-specified rate
$\alpha \in (0,4/7]$, we set the probability of random exploration to be
$\varepsilon_R(n)=1/n^\alpha$ (decaying with the number of users), and the
probability of joint exploration to be $\varepsilon_J(t)=1/t^\alpha$
(decaying with time).%
\footnote{For ease of presentation, we set the two explorations to have
the same decay rate $\alpha$, but our proof easily extends to encompass
different decay rates for the two exploration types. Furthermore, the
constant $4/7 \ge \alpha$ is not special. It could be different and only
affects another constant in our proof.}

Next, we define user $u$'s score $\widetilde{p}_{ui}^{(t)}$ for item $i$ at time $t$.
Recall that we observe $Y_{ui}^{(t)} = \{-1,0,+1\}$ as user $u$'s rating for item $i$
up to time $t$, where $0$ indicates that no rating has been given yet. We define
\[
\widetilde{p}_{ui}^{(t)}
\triangleq
  \begin{cases}
    \frac{\textstyle
          \sum_{v \in \widetilde{\mathcal{N}}_u^{(t)}}
            \ind\{Y_{vi}^{(t)} = +1\}}
         {\textstyle
          \sum_{v \in \widetilde{\mathcal{N}}_u^{(t)}}
            \ind\{Y_{vi}^{(t)} \ne 0\}}
    & \text{if }\textstyle{%
                \sum_{v \in \widetilde{\mathcal{N}}_u^{(t)}}
                  \ind\{Y_{vi}^{(t)} \ne 0\} > 0}, \\
    1/2 & \text{otherwise},
  \end{cases}
\]
where the neighborhood of user $u$ is given by 
\begin{align*}
&\widetilde{\mathcal{N}}_u^{(t)}
\triangleq
  \{v \in[n] :
    \langle \widetilde{Y}_u^{(t)}, \widetilde{Y}_v^{(t)} \rangle 
    \ge
    \theta
    |\text{supp}(\widetilde{Y}_u^{(t)})
     \cap \text{supp}(\widetilde{Y}_v^{(t)})|\},
\end{align*}
and $\widetilde{Y}_u^{(t)}$ consists of the revealed ratings of user $u$
restricted to items that have been jointly explored. In other words,
\[
\widetilde{Y}_{ui}^{(t)}
=\begin{cases}
   Y_{ui}^{(t)} & \text{if item }i\text{ is jointly explored by time }t, \\
   0 & \text{otherwise}.
 \end{cases}
\]
%
%
The neighborhoods are defined precisely by cosine similarity with respect to jointed explored items. 
To see this, for users $u$ and $v$ with revealed ratings $\widetilde{Y}_u^{(t)}$ and $\widetilde{Y}_v^{(t)}$, let $\Omega_{uv} \triangleq \text{supp}(\widetilde{Y}_u^{(t)}) \cap \text{supp}(\widetilde{Y}_v^{(t)})$ be the support overlap of $\widetilde{Y}_u^{(t)}$ and $\widetilde{Y}_v^{(t)}$, 
and let $\langle \cdot, \cdot \rangle_{\Omega_{uv}}$ be the dot product restricted to
entries in $\Omega_{uv}$. Then
\[
\frac{ \langle \widetilde{Y}_u^{(t)}, \widetilde{Y}_v^{(t)} \rangle }
     { |\Omega_{uv}| }
=\frac{ \langle \widetilde{Y}_u^{(t)}, \widetilde{Y}_v^{(t)} \rangle_{\Omega_{uv}} }
      { \sqrt{\langle \widetilde{Y}_u^{(t)}, \widetilde{Y}_u^{(t)} \rangle_{\Omega_{uv}}}
        \sqrt{\langle \widetilde{Y}_v^{(t)}, \widetilde{Y}_v^{(t)} \rangle_{\Omega_{uv}}} }\,,
\]
which is the cosine similarity of revealed rating vectors $\widetilde{Y}_u^{(t)}$ and
$\widetilde{Y}_v^{(t)}$ restricted to the overlap of their supports. Thus, users $u$ and
$v$ are neighbors if and only if their cosine similarity is at least $\theta$.

\noindent\textbf{Theoretical performance guarantee.}
We now state our main result on the proposed collaborative filtering
algorithm's performance with respect to the objective stated in equation
\eqref{eq:reward+}. 
We begin with
two reasonable, and seemingly necessary, conditions under which our the 
results will be established.

\begin{itemize}

\item[\textbf{A1}] \textbf{No $\Delta$-ambiguous items.}
There exists some constant $\Delta > 0$ such that
\[
| p_{ui} - 1/2 | \ge \Delta
\]
for all users $u$ and items $i$. (Smaller $\Delta$ corresponds
to more noise.)

\item[\textbf{A2}] \textbf{$\gamma$-incoherence.}
There exist a constant $\gamma \in [0,1)$ such that if users $u$ and $v$ are
of different types, then their item preference vectors $p_u$ and $p_v$
satisfy
\[
\frac{1}{m}
\langle 2 p_u - \mathbf{1}, 2 p_v - \mathbf{1} \rangle
\le 4 \gamma \Delta^2,
\]
where $\mathbf{1}$ is the all ones vector. Note that a different way to write
the left-hand side is
$\E[\frac{1}{m} \langle Y_u^*, Y_v^* \rangle]$, where
$Y_u^*$ and $Y_v^*$ are fully-revealed 
rating vectors of users $u$ and
$v$, and the expectation is over the random ratings of
items.
\end{itemize}

The first condition is a low noise condition to ensure that with a finite
number of samples, we can correctly classify each item as either likable or
unlikable. The incoherence condition asks that the different user types are
well-separated so that cosine similarity can tease
apart the users of different types over time. 
We provide some examples after the statement of the main theorem that suggest the
incoherence condition to be reasonable, allowing
$\E[\langle Y_u^*, Y_v^* \rangle]$ to scale as $\Theta(m)$ rather than $o(m)$.

We assume that the number of users satisfies $n = O(m^C)$ for some
constant $C > 1$. This is without loss of generality since otherwise, we can
randomly divide the $n$ users into separate population pools, each of size
$O(m^C)$ and run the recommendation algorithm independently for each pool to
achieve the same overall performance guarantee.


Finally, we define $\mu$, the minimum proportion 
of likable items for any user (and thus any user type):
\[
\mu
\triangleq \min_{u \in [n]} \frac{\sum_{i=1}^m \ind\{ p_{ui} > 1/2 \}}{m}.
\]

\begin{theorem}
\label{thm:main}
Let $\delta\in(0,1)$ be some pre-specified tolerance. Take as input to
\textsc{Collaborative-Greedy} $\theta = 2 \Delta^2 (1 + \gamma)$ where
$\gamma \in [0, 1)$ is as defined in~\textbf{A2}, and $\alpha \in (0,4/7]$. Under the
latent source model and assumptions \textbf{A1} and \textbf{A2}, if the number
of users $n=O(m^C)$ satisfies
\begin{align*}
n
=
  \Omega
  \Big(km \log \frac{1}{\delta} +
       \Big(\frac{4}{\delta}\Big)^{1/\alpha}\Big), 
\end{align*}
%
%
%
then for any
$\tc \le T \le \mu m$, the expected proportion of likable items recommended by
\textsc{Collaborative-Greedy} up until time $T$ satisfies
\[
\frac{r_+^{(T)}}{T n}
\ge
  \Big(1 - \frac{\tc}{T}\Big)(1-\delta),
\]
where
\begin{align*}
\tc &=
\Theta
\bigg(
  \bigg(
    \frac{\log \frac{km}{\Delta \delta}}{\Delta^4 (1-\gamma)^2}
  \bigg)^{1/(1-\alpha)}
  + \Big(\frac{4}{\delta}\Big)^{1/\alpha}
\bigg).
%
%
%
\end{align*}
\end{theorem}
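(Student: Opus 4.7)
The plan is to reduce the theorem to controlling a single good event on which (i) the neighborhoods $\widetilde{\mathcal{N}}_u^{(t)}$ contain exactly the users of the same type as $u$, and (ii) the scores $\widetilde{p}_{ui}^{(t)}$ concentrate close enough to the true $p_{ui}$ that every exploitation step recommends a likable item. Conditioned on this event, every time step is either an exploration step or a "correct" exploitation, so the number of non-likable recommendations up to time $T$ is at most the total number of exploration steps up to $T$, plus a small probability-$\delta$ failure term. A union-bound of the failure probabilities over users, items and the time horizon (using $n=O(m^C)$) will then drive $\delta$.

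First I would separate the three ingredients the algorithm needs. Let $N_J(t)$ and $N_R(t)$ denote the number of joint and random exploration steps up to time $t$. Since $\varepsilon_J(t)=t^{-\alpha}$ and $\varepsilon_R(n)=n^{-\alpha}$, standard Chernoff bounds give $N_J(t)=\Theta(t^{1-\alpha})$ and $N_R(t)=\Theta(tn^{-\alpha})$ with high probability. Inverting the first, at time $T_\text{learn}$ we have $N_J(T_\text{learn})\gtrsim \log(km/(\Delta\delta))/(\Delta^4(1-\gamma)^2)$ jointly explored items per user, and this is the key quantity. The term $(4/\delta)^{1/\alpha}$ in $T_\text{learn}$ and in the hypothesis on $n$ comes from ensuring $\varepsilon_R+\varepsilon_J$ is small after $T_\text{learn}$ so that exploration contributes at most a $\delta$ fraction of steps in the subsequent interval; I would verify this by integrating the exploration probabilities from $T_\text{learn}$ to $T$.

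Next I would analyze cosine similarity on jointly explored items, which is the heart of the argument. For two users $u,v$ with preference vectors $p_u,p_v$, let $S\subseteq[m]$ be the set of items jointly explored by both by time $t$, so that $\widetilde{Y}_u^{(t)}$ and $\widetilde{Y}_v^{(t)}$ agree in support on $S$ and $|S|=|\Omega_{uv}|\gtrsim N_J(t)$ with high probability (since a jointly explored item is rated by every user). Conditioning on $S$, the ratings are independent, and $\langle\widetilde{Y}_u^{(t)},\widetilde{Y}_v^{(t)}\rangle/|S|$ is an average of independent $\pm1$ products with mean $\tfrac{1}{|S|}\sum_{i\in S}(2p_{ui}-1)(2p_{vi}-1)$. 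By \textbf{A1}, same-type users have this mean $\ge 4\Delta^2$, while by \textbf{A2} plus Hoeffding over the random $S$, different-type users have this mean at most $4\gamma\Delta^2 + o(\Delta^2)$. Choosing $\theta=2\Delta^2(1+\gamma)$ and applying Hoeffding in the $\pm1$ product terms conditioned on $S$, the probability that we misclassify a pair $u,v$ is at most $\exp(-c\Delta^4(1-\gamma)^2 |S|)$; plugging $|S|\gtrsim \log(km/(\Delta\delta))/(\Delta^4(1-\gamma)^2)$ makes this small enough to union-bound over all $O(n^2)\le O(m^{2C})$ pairs. This is the step I expect to be the main obstacle: one must carefully handle that $S$ itself is random and depends on the history, and that the ratings used in the dot product are conditionally independent only given the identities of the jointly explored items, not given the algorithm's past recommendations.

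Once neighborhoods are correct, I would show exploitation recommends a likable item. On the good event, for any user $u$ and any item $i$ with $|p_{ui}-1/2|\ge\Delta$, the score $\widetilde{p}_{ui}^{(t)}$ averages $\ind\{Y_{vi}^{(t)}=+1\}$ over $\widetilde{\mathcal{N}}_u^{(t)}$, which now consists solely of users sharing $p_u$. The number of neighbors that have rated item $i$ is at least $(N_J(t)+N_R(t))\cdot n/k$ up to constants, which by the assumption $n=\Omega(km\log(1/\delta))$ and the bound on $N_R(t)$ is $\Omega(\log(m/\delta)/\Delta^2)$; Hoeffding then gives $|\widetilde{p}_{ui}^{(t)}-p_{ui}|<\Delta$ with probability at least $1-\delta/(nm)$, so the maximizer among unrated items is likable whenever one remains, which holds for $T\le\mu m$. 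Summing across $u$ and $t$, the expected number of likable recommendations in $[T_\text{learn},T]$ is at least $(T-T_\text{learn})n(1-\delta)$, which rearranges to the claimed bound on $r_+^{(T)}/(Tn)$. I would finish by collecting all the failure probabilities into a single $\delta$ by choosing constants appropriately.
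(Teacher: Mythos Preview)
Your high-level strategy matches the paper's: establish a per-$(u,t)$ good event under which exploitation must pick a likable item, bound its probability using joint exploration for the neighborhood step and random exploration for per-item score concentration, then sum over $t\in[\tc,T]$ and $u\in[n]$. The genuine difference is in how you define the good event. You ask that $\widetilde{\mathcal{N}}_u^{(t)}$ equal \emph{exactly} the set of users of $u$'s type, obtained by union-bounding the per-pair Hoeffding error $\exp(-c\Delta^4(1-\gamma)^2|S|)$ over $O(n^2)=O(m^{2C})$ pairs; since $\log n=O(\log m)$ this still lands in the stated $\tc$. The paper instead defines the good event $\En(u,t)$ to allow a small budget of wrong-type neighbors (at least $n/(5k)$ same-type, at most $\frac{\Delta t n^{1-\alpha}}{10km}$ different-type), controls the counts by Chebyshev and Markov (the indicators share $\widetilde{Y}_u^{(t)}$ so are not independent), and then in the exploitation step argues that the at-most-$\Delta$-fraction of contaminated ratings cannot flip $\widetilde{p}_{ui}$ across $1/2$ when $|p_{ui}-1/2|\ge\Delta$. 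Your route is cleaner---once neighborhoods are exact the score is a pure i.i.d.\ average---while the paper's route does not need the $n=O(m^C)$ cap at the neighborhood stage and shows explicitly how the $\Delta$-noise margin absorbs a few bad neighbors.

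One concrete fix: the number of good neighbors that have rated a given item $i$ is not ``$(N_J(t)+N_R(t))\cdot n/k$.'' Joint exploration rates a single common item per step, so it contributes to item $i$ only if $i$ happens to sit in the first $N_J(t)$ positions of $\sigma$; the count you need comes from random exploration. Each of the $\approx n/k$ same-type neighbors has rated item $i$ with probability $\approx \varepsilon_R(n)\,t/m = t n^{-\alpha}/m$, giving $\approx t n^{1-\alpha}/(km)$ good-neighbor ratings per item. It is this quantity that must be $\Omega(\Delta^{-2}\log(m/\delta))$, i.e.\ $n t^{1-\alpha}=\Omega\big(\tfrac{km}{\Delta^2}\log\tfrac{m}{\delta}\big)$, and you should check that this follows from the hypotheses on $n$ together with $t\ge\tc$ (the paper isolates this as a separate condition before absorbing it into the $\Theta$'s).
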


Theorem~\ref{thm:main} says that there are $\tc$ initial time steps for which
the algorithm may be giving poor recommendations. Afterward, for
$\tc<T<\mu m$, the algorithm becomes near-optimal, recommending a fraction of
likable items $1-\delta$ close to what an optimal oracle algorithm (that
recommends all likable items first) would achieve. Then for time horizon
$T>\mu m$, we can no longer guarantee that there are likable items left to
recommend. Indeed, if the user types each have the same fraction of likable
items, then even an oracle recommender would use up the $\mu m$ likable items
by this time.
Meanwhile, to give a sense of how long the learning period $\tc$ is, note that when $\alpha = 1/2$, we have
$\tc$ scaling as $\log^2(km)$, and if we choose $\alpha$ close to $0$, then
$\tc$ becomes nearly $\log (km)$. In summary,
after $\tc$ initial time steps, the simple algorithm proposed is essentially
optimal.



To gain intuition for incoherence condition \textbf{A2}, we calculate the
parameter $\gamma$ for three examples. 

\begin{example}
Consider when there is no noise, i.e., $\Delta = \frac12$. Then users' ratings
are deterministic given their user type. Produce $k$ vectors of
probabilities by drawing $m$ independent $\text{Bernoulli}(\frac12)$ random
variables ($0$ or $1$ with probability $\frac12$ each) for each user type. For any item
$i$ and pair of users $u$ and $v$ of different types,
$Y_{ui}^*\cdot Y_{vi}^*$ is a Rademacher random variable
($\pm 1$ with probability $\frac12$ each), and thus the inner product of two
user rating vectors is equal to the sum of $m$ Rademacher random variables.
Standard concentration inequalities show that one may take
$\gamma = \Theta\big(\sqrt{\frac{\log m}{m}}\big)$ to satisfy
$\gamma$-incoherence with probability $1-1/{\sf poly}(m)$. 
\end{example}

\begin{example}
We expand on the previous example by choosing an arbitrary $\Delta>0$ and making all latent source probability vectors have entries equal to $\frac12\pm \Delta$ with probability $\frac12$ each. As
before let user $u$ and $v$ are from different type. 
Now $\E[Y_{ui}^*\cdot Y_{vi}^*] = (\frac12 +\De)^2 + (\frac12 -\De)^2 - 2(\frac14 - \De^2)= 4\De^2$ if $p_{ui}=p_{vi}$ and $\E[Y_{ui}^*\cdot Y_{vi}^*] = 2(\frac14 -\De^2) - (\frac12+\Delta)^2 - (\frac12-\Delta)^2 = -4\De^2$ if $p_{ui}=1-p_{vi}$. The value of the inner product $\E[ \langle Y_u^*, Y_v^* \rangle]$ is again equal to the sum of $m$ Rademacher random variables, but this time scaled by $4\Delta^2$. For similar reasons as before, $\gamma = \Theta\big(\sqrt{\frac{\log m}{m}}\big)$ suffices 
to satisfy $\gamma$-incoherence with probability $1-1/{\sf poly}(m)$.
\end{example}

\begin{example}
Continuing with the previous example, now suppose each entry is $\frac12 + \Delta$ with probability $\mu \in (0,1/2)$ and $\frac12 - \Delta$ with probability $1-\mu$. Then for two users $u$ and $v$ of different
types, $p_{ui}=p_{vi}$ with probability $\mu^2 + (1-\mu)^2$. This implies that $\E[\langle Y_u^*, Y_v^* \rangle] = 4m \Delta^2 (1-2\mu)^2$.  Again, using standard concentration,   
this shows that $\gamma = (1-2\mu)^2 + \Theta\big(\sqrt{\frac{\log m}{m}}\big)$ suffices to satisfy $\gamma$-incoherence with probability $1-1/{\sf poly}(m)$.
\end{example}

\section{Proof of Theorem~\ref{thm:main}}\label{sec:proof}



Recall that $X_{ut}$ is the indicator random variable for whether the item
$\pi_{ut}$ recommended to user $u$ at time $t$ is likable, i.e.,
$p_{u \pi_{ut}} > 1/2$. Given assumption \textbf{A1}, this is equivalent to
the event that $p_{u \pi_{ut}} \ge \frac12 + \Delta$. The expected proportion
of likable items is
\begin{equation*}
\frac{r_+^{(T)}}{T n}
= \frac{1}{T n} \sum_{t=1}^T \sum_{u=1}^n \E[X_{ut}]
= \frac{1}{T n} \sum_{t=1}^T \sum_{u=1}^n \P( X_{ut}=1).
\end{equation*}
Our proof focuses on lower-bounding $\P(X_{ut}=1)$. The key idea is to
condition on what we call the ``good neighborhood'' event $\En(u,t)$:
\begin{align*}
\En(u,t) = \Big\{ &\text{ at time $t$, user
$u$ has $\ge \frac{n}{5k}$ neighbors from the same user type
(``good neighbors"),} \\
& \text{ and $\le \frac{\Delta t n^{1-\alpha}}{10km}$ neighbors from other
 user types
(``bad neighbors")}\Big\}.
\end{align*}
This good neighborhood event will enable us to argue that after an initial
learning time, with high probability there are at most $\Delta$ as many
ratings from bad neighbors as there are from good neighbors.

The proof of Theorem \ref{thm:main} consists of two parts. The first part uses
joint exploration to show that after a sufficient amount of time, the good
neighborhood event $\En(u,t)$ holds with high probability.

\begin{lemma}
\label{lem:En-happens}
For user $u$, after
%
%
\[
t
\ge
%
\bigg(\frac{2 \log (10kmn^\alpha/\Delta)}{\Delta^4(1-\gamma)^2}\bigg)^{1/(1-\alpha)}
\]
time steps,
\begin{align*}
\P(\En(u,t)) 
&\ge
   1 - \exp\Big(-\frac{n}{8k}\Big)
     - 12\exp\Big(-\frac{\Delta^4 (1-\gamma)^2 t^{1-\alpha}}{20} \Big)\,.
\end{align*}
\end{lemma}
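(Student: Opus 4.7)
The plan is to decompose the event $\En(u,t)$ into three more basic high-probability events and then bound each separately via Chernoff/Hoeffding concentration. The key quantity driving the whole argument is the number $N_J^{(t)}$ of joint-exploration time steps that have occurred by time $t$: since joint exploration fires independently at step $s$ with probability $s^{-\alpha}$, we have $\E[N_J^{(t)}] = \sum_{s=1}^{t} s^{-\alpha} = \Theta(t^{1-\alpha})$, and a Chernoff bound gives $N_J^{(t)} \ge c\, t^{1-\alpha}$ with probability at least $1 - \exp(-\Omega(t^{1-\alpha}))$. A small amount of additional care (using that random exploration fires only with probability $n^{-\alpha} \ll 1/t^\alpha$) is needed to argue that, for any particular other user $v$, the support overlap $|\Omega_{uv}|$ between $\widetilde{Y}_u^{(t)}$ and $\widetilde{Y}_v^{(t)}$ is still $\Omega(t^{1-\alpha})$, since the two users might have diverted off of different items in $\sigma$ during random exploration.

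Next I would handle the count of same-type users. Since each of the other $n-1$ users independently matches $u$'s type with probability $1/k$, a Chernoff bound immediately shows that at least $n/(2k)$ users share $u$'s type with probability at least $1 - \exp(-n/(8k))$. This produces the $\exp(-n/(8k))$ term in the lemma.

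The heart of the argument is controlling cosine similarities. For any user $v$, condition on the (random) set $J = \Omega_{uv}$ of items jointly explored by both $u$ and $v$. The quantity $\langle \widetilde{Y}_u^{(t)}, \widetilde{Y}_v^{(t)}\rangle / |J|$ is an average of $|J|$ independent $\pm 1$-valued products $Y_{ui} Y_{vi}$. If $u$ and $v$ share a type, then by assumption \textbf{A1} each summand has expectation $(2p_{ui}-1)^2 \ge 4\Delta^2$; if they are of different types, assumption \textbf{A2} gives an average expectation of at most $4\gamma\Delta^2$ (since $J$ is essentially a uniformly random subset of items, via the random ordering $\sigma$). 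The threshold $\theta = 2\Delta^2(1+\gamma)$ is the midpoint, giving a gap of $2\Delta^2(1-\gamma)$ on both sides. Hoeffding's inequality then yields, for each individual $v$,
\[
\P\bigl(\text{$v$ is misclassified}\bigr)
\;\le\; 2\exp\!\bigl(-2|J|\,\Delta^4(1-\gamma)^2\bigr)
\;\le\; 2\exp\!\bigl(-c\,t^{1-\alpha}\Delta^4(1-\gamma)^2\bigr).
\]
For same-type users, summing these failure probabilities over the $\ge n/(2k)$ same-type users and using Markov yields at least $n/(5k)$ correctly-identified good neighbors with high probability. For different-type users, plugging in $t$ as in the lemma statement forces the per-user misclassification probability down to roughly $\Delta/(10\,km\,n^\alpha)$, so the expected number of bad neighbors is at most $\Delta n^{1-\alpha}/(10km)$; a Bernstein/Chernoff bound converts this into the required $\Delta t n^{1-\alpha}/(10km)$ upper bound with high probability. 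Unioning over all three events and collecting constants yields the stated bound with its $12\exp(-\Delta^4(1-\gamma)^2 t^{1-\alpha}/20)$ term.

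The main obstacle, I expect, is the bookkeeping around the support overlap $\Omega_{uv}$. The clean statements above assumed $|J| \approx N_J^{(t)}$, but because individual users may have rated items during random exploration (or exploitation) before those items came up in $\sigma$, the overlap $\Omega_{uv}$ is genuinely random and pair-dependent. Controlling it uniformly over all $n$ candidate neighbors $v$, while keeping the concentration exponents in the Hoeffding step intact, is the delicate part; everything else is Chernoff/Hoeffding plus a union bound.
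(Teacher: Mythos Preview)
Your decomposition matches the paper's: union-bound over (i) too few same-type users, (ii) too few joint-exploration steps, (iii) too few good neighbors, and (iv) too many bad neighbors, with Hoeffding controlling the per-pair misclassification probability. Two corrections are worth making.

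First, what you flag as ``the main obstacle'' is not one. After $N_J$ joint-exploration steps, \emph{every} user has rated all of $\sigma(1),\dots,\sigma(N_J)$. This follows by induction: at each joint-exploration step a user rates the first $\sigma$-item she has not yet rated, so if she has already rated $\sigma(1),\dots,\sigma(j)$ before the $(j{+}1)$-th joint step, then either she now rates $\sigma(j{+}1)$ or she had already rated it earlier (via random exploration or exploitation). Consequently $\Omega_{uv}=\{\sigma(1),\dots,\sigma(N_J)\}$ for \emph{every} pair, with no pair-dependent bookkeeping; this is also precisely what makes $\Omega_{uv}$ a uniformly random subset of items, which you need when invoking \textbf{A2} for the different-type Hoeffding bound.

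Second, you cannot apply Bernstein/Chernoff to $n_{\mathrm{bad}}=\sum_{v}\ind\{v\text{ declared a neighbor of }u\}$: these indicators all depend on the common vector $\widetilde{Y}_u^{(t)}$ and are not independent. The paper instead uses Markov's inequality, which only needs $\E[n_{\mathrm{bad}}]\le n\beta$ with $\beta=\exp(-\Delta^4(1-\gamma)^2 t^{1-\alpha})$, giving $\P(n_{\mathrm{bad}}\ge n\sqrt{\beta})\le\sqrt{\beta}$; the hypothesis on $t$ then forces $\sqrt{\beta}\le\Delta/(10kmn^\alpha)$, so $n\sqrt{\beta}\le\Delta n^{1-\alpha}/(10km)\le\Delta t n^{1-\alpha}/(10km)$ as required. (Incidentally, for $n_{\mathrm{good}}$ the paper uses Chebyshev and bounds covariances explicitly; your Markov argument on the number of missed same-type users is simpler and works just as well, since $\E[\text{misses}]\le(n/2k)\beta$ already gives $\P(\text{misses}\ge 3n/(10k))\le 5\beta/3$.)
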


In the above lower bound, the first exponentially decaying term could be
thought of as the penalty for not having enough users in the system from the
$k$ user types, and the second decaying term could be thought of as the
penalty for not yet clustering the users correctly.

The second part of our proof to Theorem \ref{thm:main} shows that, with
high probability, the good neighborhoods have, through random exploration,
accurately estimated the probability of liking each item. Thus, we correctly
classify each item as likable or not with high probability, which leads to a
lower bound on $\P(X_{ut}=1)$.

\begin{lemma}
\label{lem:final}
For user $u$ at time $t$, if the good neighborhood event $\En(u,t)$ holds
and $t \le \mu m$, then
\begin{align*}
\P(X_{ut}=1) 
&\ge
   1
   - 2m\exp\Big(-\frac{\Delta^2 t n^{1-\alpha}}{40km}\Big)
   - \frac{1}{t^\alpha} - \frac{1}{n^\alpha}\,.
\end{align*}
\end{lemma}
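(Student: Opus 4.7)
The plan is to decompose the $t$-th step into random exploration (probability $1/n^\alpha$), joint exploration (probability $1/t^\alpha$), and exploitation (probability $1-1/n^\alpha-1/t^\alpha$); the first two contribute the $-1/n^\alpha - 1/t^\alpha$ subtractions directly, since in those cases the recommended item may be unlikable. All the work is in showing that on an exploitation step the argmax over unrated items is likable with probability at least $1 - 2m\exp(-\Delta^2 t n^{1-\alpha}/(40km))$, after which combining the three cases via $(1-a)(1-b)\ge 1-a-b$ recovers the stated bound.

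For the exploitation step, the core is a per-item concentration bound on $\widetilde{p}_{ui}^{(t)}$ carried out under the conditioning event $\En(u,t)$. Fix an item $i$. For any good neighbor $v$ of $u$ (same type, so $p_{vi}=p_{ui}$), on each of the $t$ past steps random exploration occurs with probability $1/n^\alpha$, and conditional on random exploration with $i$ still unrated by $v$, item $i$ is selected with probability at least $1/m$ (using $t\le\mu m\le m$). A Chernoff bound on the number of good neighbors who have rated $i$ by time $t$ shows this count is at least $N_g^\star$ of order $t n^{1-\alpha}/(km)$, except on an event of probability $\exp(-\Omega(t n^{1-\alpha}/(km)))$. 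Conditional on having at least $N_g^\star$ independent $\mathrm{Bernoulli}(p_{ui})$ ratings, Hoeffding's inequality gives that the empirical $+1$ fraction among those good-neighbor ratings lies within $\Delta/2$ of $p_{ui}$, except on an event of probability $2\exp(-\Omega(\Delta^2 N_g^\star))$.

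The bad-neighbor ratings can shift the full score $\widetilde p_{ui}^{(t)}$ (a weighted average of $\pm 1$ ratings over all rating neighbors) by at most the ratio of bad- to good-neighbor ratings of $i$; the constants $n/(5k)$ and $\Delta t n^{1-\alpha}/(10km)$ that appear in $\En(u,t)$ are precisely tuned so that this deterministic perturbation is at most $\Delta/2$. Combined with the Hoeffding slack, $|\widetilde p_{ui}^{(t)} - p_{ui}| < \Delta$ holds with the claimed high probability, and by assumption \textbf{A1} every likable item then has score strictly above $1/2$ while every unlikable item has score strictly below $1/2$. A union bound over the $m$ items produces the $2m$ prefactor and yields the target failure probability $2m\exp(-\Delta^2 t n^{1-\alpha}/(40km))$. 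On the favorable event, the argmax over unrated items is necessarily a likable item, since $t\le\mu m$ and a simple counting argument show at least one unrated likable item remains.

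The main obstacle will be choosing constants so that three error sources --- the Chernoff lower bound on the number of good-neighbor ratings of $i$, the Hoeffding deviation of the good-neighbor $+1$ fraction, and the deterministic bad-neighbor perturbation --- combine to keep the total score error strictly below $\Delta$ while the final exponent matches the prescribed $\Delta^2 t n^{1-\alpha}/(40km)$. A secondary subtlety is that the per-step probability a random exploration selects item $i$ is $1/(\text{number of items unrated by }v)$, which depends on the random past; handling this rigorously requires conditioning on the history and invoking the uniform lower bound $1/m$, but does not alter the scaling.
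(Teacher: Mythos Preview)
Your overall structure---subtracting the two exploration probabilities, then for exploitation combining a Chernoff bound on the number of good-neighbor ratings of each item with a Hoeffding bound on those ratings, and union-bounding over $m$ items---matches the paper's proof exactly (the paper packages these as its Lemmas~\ref{lem:item-exploration} and~\ref{lem:item-classification}). The one place where your argument deviates, and where it would not yield the stated exponent, is the additive decomposition ``Hoeffding slack $\Delta/2$'' plus ``bad-neighbor perturbation $\Delta/2$.''

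The perturbation claim is not correct. With at least $a_0 = t n^{1-\alpha}/(10km)$ good-neighbor ratings and at most $\Delta a_0$ bad neighbors (hence at most $\Delta a_0$ bad ratings of item $i$), the shift $|\widetilde p_{ui} - G^+/G|$ is bounded by $B/(G+B) \le \Delta/(1+\Delta)$, which exceeds $\Delta/2$ whenever $\Delta<1$. To force $|\widetilde p_{ui} - p_{ui}| < \Delta$ you would then need Hoeffding slack below $\Delta - \Delta/(1+\Delta) = \Delta^2/(1+\Delta)$, and Hoeffding at that scale gives an exponent of order $\Delta^4 a_0$ rather than $\Delta^2 a_0$---so you would not recover the $\Delta^2 t n^{1-\alpha}/(40km)$ in the lemma.

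The paper avoids this by \emph{not} trying to estimate $p_{ui}$; it only needs $\widetilde p_{ui}$ on the correct side of $1/2$. For a likable item the worst case sets all bad ratings to $-1$, so $\widetilde p_{ui}$ stochastically dominates $\text{Bin}(a_0,p_{ui})/((1+\Delta)a_0)$, and one needs $\text{Bin}(a_0,p_{ui}) > (1+\Delta)a_0/2$. Since $p_{ui}\ge 1/2+\Delta$, the margin is $a_0\Delta/2$ and Hoeffding gives $\exp(-a_0\Delta^2/2)$ directly. The point is that the $\Delta a_0$ bad ratings cost only $\Delta a_0/2$ against the threshold $1/2$, not $\Delta a_0$ against $p_{ui}$. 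Replace your additive step with this direct threshold argument and the constants line up.
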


Here, the first exponentially decaying term could be thought of as the cost of
not classifying items correctly as likable or unlikable, and the last two
decaying terms together could be thought of as the cost of exploration (we
explore with probability
$\varepsilon_J(t) + \varepsilon_R(n) = 1/t^\alpha + 1/n^\alpha$).

We defer the proofs of Lemmas~\ref{lem:En-happens} and \ref{lem:final} to
Appendices \ref{sec:appendix-proof-En-happens} and \ref{appendix:lemmafinal}.
Combining these lemmas and choosing
appropriate constraints on the numbers of users and items, we produce the
following lemma.

\begin{lemma}
\label{lem:general-conditions}
Let $\delta \in (0,1)$ be some pre-specified tolerance.
If the number of users $n$ and items $m$ satisfy
\begin{align*}
n  & \ge \max\Big\{
               8k\log\frac{4}{\delta},
               \Big(\frac{4}{\delta}\Big)^{1/\alpha}
             \Big\}, \\
\mu m \ge
t  & \ge \max\bigg\{
 \bigg(\frac{2 \log (10kmn^\alpha/\Delta)}{\Delta^4(1-\gamma)^2}\bigg)^{1/(1-\alpha)}, 
%
                \bigg(
                \frac{20\log(96/\delta)}{\Delta^4 (1-\gamma)^2}
                \bigg)^{1/(1-\alpha)},
                \Big(\frac{4}{\delta}\Big)^{1/\alpha}
             \bigg\}, \\
n t^{1-\alpha}
   & \ge \frac{40 k m}{\Delta^2} \log\Big(\frac{16m}{\delta}\Big),
\end{align*}
then
$
\P(X_{ut}=1) \ge 1 - \delta.
$
\end{lemma}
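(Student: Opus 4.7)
The plan is to derive the bound $\P(X_{ut}=1) \ge 1-\delta$ by a direct union-bound stitching together the two preceding lemmas. Write
\[
\P(X_{ut}=0)
\le
\P(X_{ut}=0\mid \En(u,t)) + \P(\En(u,t)^c),
\]
so that it suffices to show each of the five error terms supplied by Lemmas~\ref{lem:En-happens} and~\ref{lem:final} is at most an appropriate fraction of $\delta$. The natural allocation, which matches the numerical constants appearing in the hypotheses, is $\delta/4$ for each of the exploration terms $1/t^\alpha$ and $1/n^\alpha$ and for $\exp(-n/(8k))$, and $\delta/8$ for each of the two remaining exponential terms. These fractions sum to $1$.

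First I would invoke Lemma~\ref{lem:En-happens}: its hypothesis $t\ge \bigl(2\log(10kmn^\alpha/\Delta)/(\Delta^4(1-\gamma)^2)\bigr)^{1/(1-\alpha)}$ is precisely one of the assumed conditions, so we obtain $\P(\En(u,t)^c)\le \exp(-n/(8k)) + 12\exp(-\Delta^4(1-\gamma)^2 t^{1-\alpha}/20)$. The first of these is $\le \delta/4$ by the hypothesis $n\ge 8k\log(4/\delta)$. The second is $\le \delta/8$ exactly when $\Delta^4(1-\gamma)^2 t^{1-\alpha}/20 \ge \log(96/\delta)$, i.e., under the hypothesis $t\ge \bigl(20\log(96/\delta)/(\Delta^4(1-\gamma)^2)\bigr)^{1/(1-\alpha)}$.

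Next I would invoke Lemma~\ref{lem:final} (using the hypothesis $t\le \mu m$), which on the event $\En(u,t)$ yields $\P(X_{ut}=0\mid \En(u,t)) \le 2m\exp(-\Delta^2 tn^{1-\alpha}/(40km)) + 1/t^\alpha + 1/n^\alpha$. The exploration terms are bounded by $\delta/4$ each under $n\ge (4/\delta)^{1/\alpha}$ and $t\ge (4/\delta)^{1/\alpha}$. The remaining term is $\le \delta/8$ when $\Delta^2 tn^{1-\alpha}/(40km) \ge \log(16m/\delta)$, which is exactly the condition $nt^{1-\alpha}\ge (40km/\Delta^2)\log(16m/\delta)$.

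Summing these five contributions gives $\P(X_{ut}=0)\le \delta/4 + \delta/8 + \delta/4 + \delta/4 + \delta/8 = \delta$, which is the claim. There is no real obstacle here beyond the bookkeeping of matching each hypothesis to the correct tail term; the main thing to be careful about is tracking the constant $12$ in front of one exponential from Lemma~\ref{lem:En-happens} and the factor $2m$ in Lemma~\ref{lem:final}, which are what force the $\log(96/\delta)$ and $\log(16m/\delta)$ constants in the stated conditions rather than something like $\log(1/\delta)$ or $\log(m/\delta)$.
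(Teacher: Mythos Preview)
Your proposal is correct and mirrors the paper's proof exactly: combine Lemmas~\ref{lem:En-happens} and~\ref{lem:final} via the decomposition $\P(X_{ut}=0)\le \P(X_{ut}=0\mid\En(u,t))+\P(\En(u,t)^c)$, then bound the five resulting terms by $\delta/4,\delta/8,\delta/8,\delta/4,\delta/4$ respectively using the stated hypotheses. Your tracking of which constant in each hypothesis (the $12$, the $2m$, etc.) forces which logarithmic factor is precisely the bookkeeping the paper carries out.
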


\begin{proof}
With the above conditions on $n$ and $t$ satisfied, we combine Lemmas
\ref{lem:En-happens} and \ref{lem:final} to obtain
\begin{align*}
\P(X_{ut}=1) 
&\ge
   1 - \exp\Big(-\frac{n}{8k}\Big)
     - 12\exp\Big(-\frac{\Delta^4 (1-\gamma)^2 t^{1-\alpha}}{20} \Big)
     - 2m\exp\Big(-\frac{\Delta^2 t n^{1-\alpha}}{40km}\Big) \\
&\quad
   - \frac{1}{t^\alpha} - \frac{1}{n^\alpha} 
\ge
  1 - \frac{\delta}{4}
  - \frac{\delta}{8}
  - \frac{\delta}{8}
  - \frac{\delta}{4}
  - \frac{\delta}{4}
= 1 - \delta. \qedhere
\end{align*}

\end{proof}

Theorem \ref{thm:main} follows as a corollary to Lemma
\ref{lem:general-conditions}. As previously mentioned, without loss of
generality, we take $n=O(m^C)$. Then with number of users $n$ satisfying
\begin{equation*}
O(m^C)
=
n
=
  \Omega
  \Big(km \log \frac{1}{\delta} +
             \Big(\frac{4}{\delta}\Big)^{1/\alpha}\Big), 
\end{equation*}
and for any time step $t$ satisfying
\begin{equation*}
\mu m
\ge
t
\ge
  \Theta
  \bigg(
    \bigg(
      \frac{\log \frac{km}{\Delta \delta}}{\Delta^4 (1-\gamma)^2}
    \bigg)^{1/(1-\alpha)}
    + \Big(\frac{4}{\delta}\Big)^{1/\alpha}
  \bigg)
\triangleq \tc\,, 
\end{equation*}
we simultaneously meet all of the conditions of
Lemma~\ref{lem:general-conditions}. Note that the upper bound on number of
users $n$ appears since without it, $\tc$ would depend on $n$ (observe that in
Lemma \ref{lem:general-conditions}, we ask that $t$ be greater than a quantity
that depends on $n$).
Provided that the time horizon satisfies
$T \le \mu m$, then 
\begin{align*}
\frac{r_+^{(T)}}{T n}
&\ge
   \frac{1}{T n} \sum_{t=\tc}^T \sum_{u=1}^n \P(X_{ut}=1) 
\ge
  \frac{1}{T n} \sum_{t=\tc}^T \sum_{u=1}^n (1-\delta) 
= \frac{(T - \tc)(1-\delta)}{T},
\end{align*}
yielding the theorem statement.


\section{Experimental Results}
\label{sec:experiments}

We provide only a summary of our experimental results here, deferring
full details to Appendix \ref{sec:appendix-experimental-results}.
We simulate an online recommendation system based on movie ratings from the
Movielens10m and Netflix datasets, each of which provides a sparsely filled
user-by-movie rating matrix with ratings out of 5 stars. Unfortunately,
existing collaborative filtering datasets such as the two we consider don't
offer the interactivity of a real online recommendation system, nor do they
allow us to reveal the rating for an item that a user didn't actually rate.
For simulating an online system, the former issue can be dealt with by
simply revealing entries in the user-by-item rating matrix over time. We
address the latter issue by only considering a dense ``top users vs.~top
items'' subset of each dataset. In particular, we consider only the ``top''
users who have rated the most number of items, and the ``top'' items that have
received the most number of ratings. While this dense part of the dataset is
unrepresentative of the rest of the dataset, it does allow us to use actual
ratings provided by users without synthesizing any ratings. A rigorous
validation would require an implementation of an actual interactive online
recommendation system, which is beyond the scope of our paper.

First, we validate that our latent source model is reasonable for the dense
parts of the two datasets we consider by looking for clustering behavior
across users. We find that the dense top users vs.~top movies matrices do in
fact exhibit clustering behavior of users and also movies, as shown in Figure
\ref{fig:experimental-results}\subref{fig:bctf}. The clustering was found via
Bayesian clustered tensor factorization, which was previously shown to model
real movie ratings data well \cite{bctf_2009}.

Next, we demonstrate our algorithm
\textsc{Collaborative-Greedy} on the two simulated online movie recommendation
systems, showing that it outperforms two existing recommendation algorithms
Popularity Amongst Friends (PAF) \cite{paf} and a method by Deshpande and
Montanari (DM) \cite{deshpande_montanari}.
Following the experimental setup of \cite{paf},
we quantize a rating of 4 stars or more to be $+1$ (likable), and a rating
less than 4 stars to be $-1$ (unlikable). While we look at a dense subset
of each dataset, there are still missing entries. If a user $u$ hasn't rated
item $j$ in the dataset, then we set the corresponding true rating to 0,
meaning that in our simulation, upon recommending item $j$ to user $u$, we
receive 0 reward, but we still mark that user $u$ has consumed item $j$; thus,
item $j$ can no longer be recommended to user $u$. For both Movielens10m and
Netflix datasets, we consider the top $n=200$ users and the top $m=500$
movies. For Movielens10m, the resulting user-by-rating matrix has 80.7\%
nonzero entries. For Netflix, the resulting matrix has 86.0\% nonzero entries.
For an algorithm that recommends item $\pi_{ut}$ to user $u$ at time $t$, we
measure the algorithm's average cumulative reward up to time $T$ as
$
\frac{1}{n} \sum_{t=1}^T\sum_{u=1}^n Y_{u \pi_{ut}}^{(T)},
$
where we average over users.
For all four methods, we recommend items until we reach time $T=500$, 
i.e., we make movie recommendations until each user
has seen all $m=500$ movies. 
We disallow the matrix completion step for DM to see the users that we
actually test on, but we allow it to see the the same items as what is in the
simulated online recommendation system in order to compute these items'
feature vectors (using the rest of the users in the dataset). Furthermore,
when a rating is revealed, we provide DM both the thresholded rating and the
non-thresholded rating, the latter of which DM uses to estimate user feature
vectors over time. We discuss choice of algorithm parameters in
Appendix \ref{sec:appendix-experimental-results}.
In short, parameters $\theta$ and $\alpha$ of our
algorithm are chosen based on training data, whereas we allow the other
algorithms to use whichever parameters give the best results on the test data.
Despite giving the two competing algorithms this advantage,
\textsc{Collaborative-Greedy} outperforms the two, as shown in Figure
\ref{fig:experimental-results}\subref{fig:results-top}. Results on the Netflix
dataset are similar.

\begin{figure}[t]
\captionsetup[subfloat]{farskip=0em,captionskip=-.4em,nearskip=-5em}
\noindent
\centering
\subfloat[]{
\includegraphics[width=6.7cm]{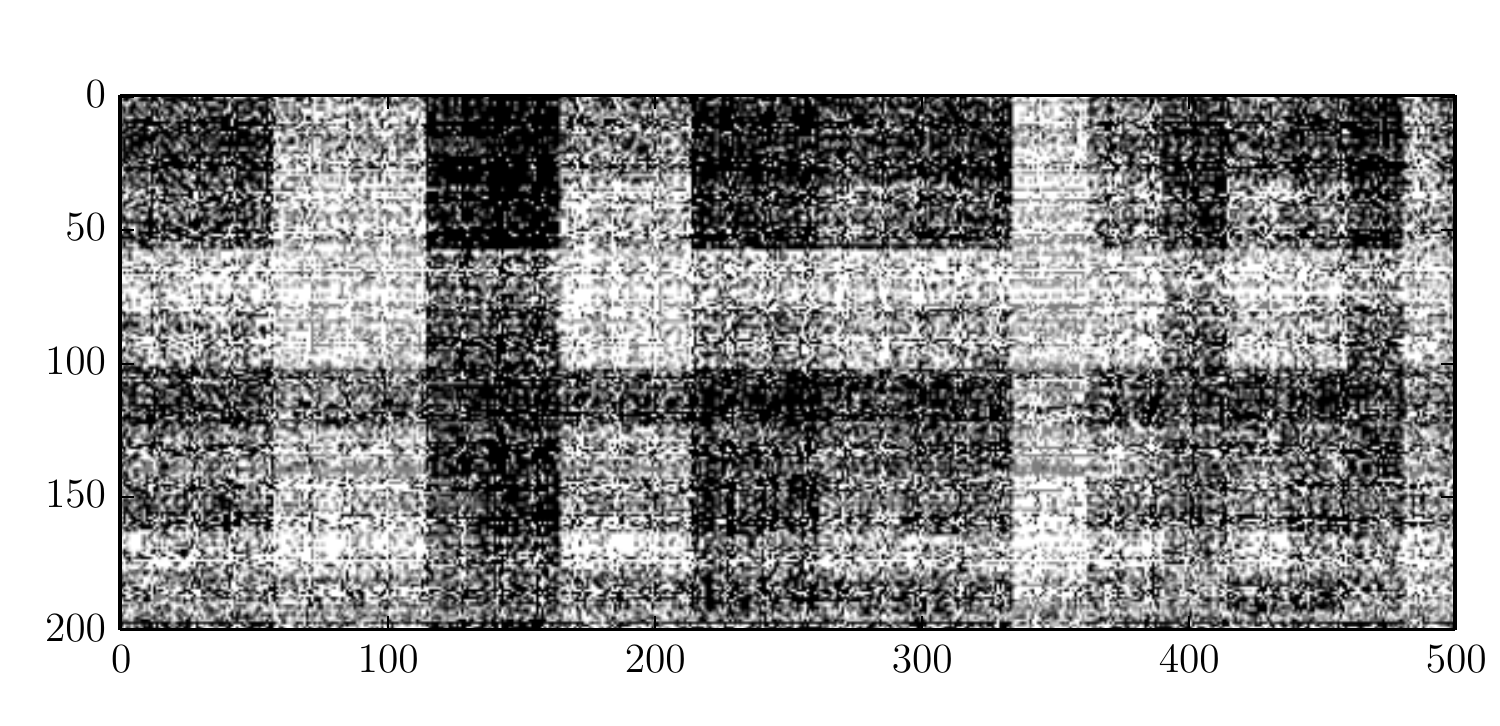}
\label{fig:bctf}
}
\subfloat[]{
\includegraphics[width=6.7cm]{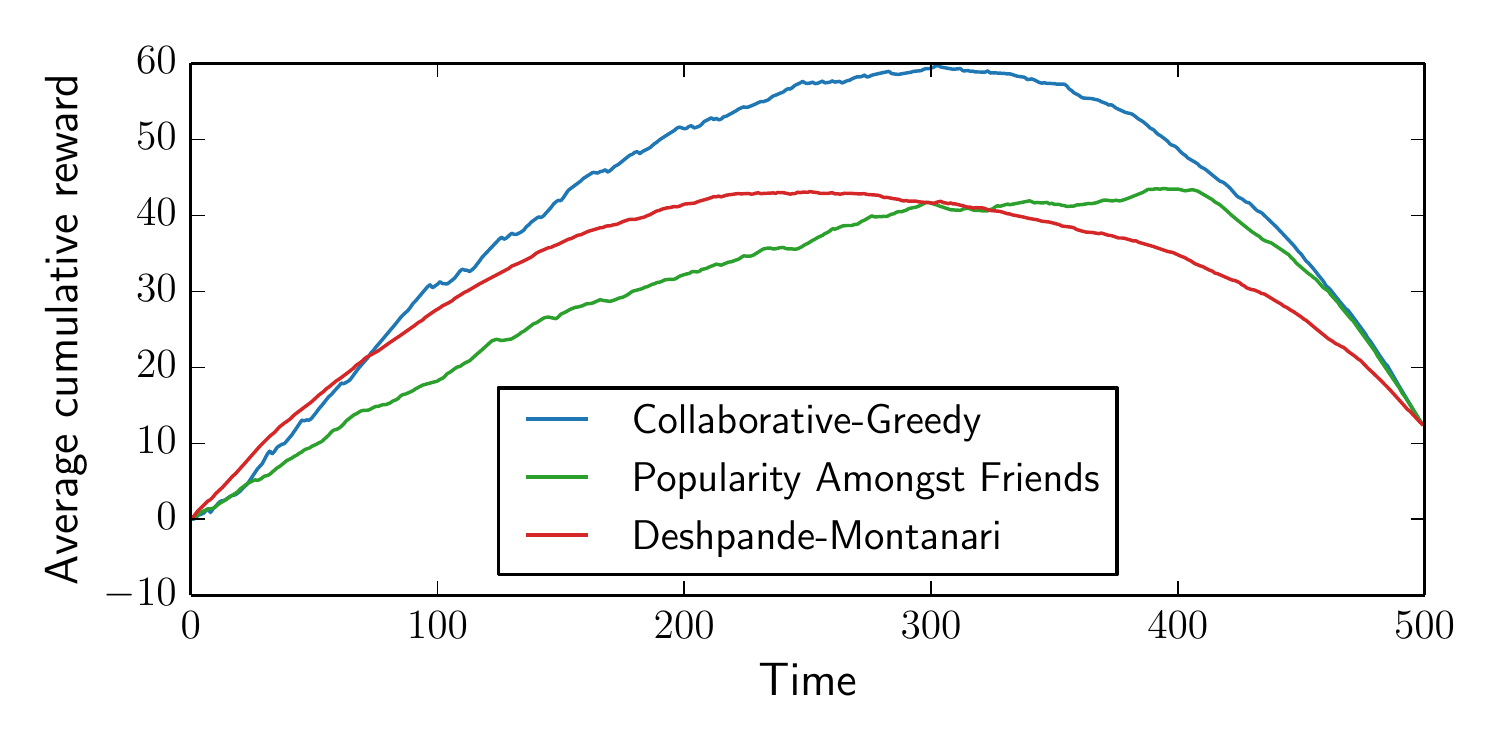}
\label{fig:results-top}
}
\caption{Movielens10m dataset: (a) Top users by top movies matrix with rows and
columns reordered to show clustering of users and items. (b) Average cumulative
rewards over time.
\vspace{-.8em}
}
\label{fig:experimental-results}
\end{figure}

\section{Discussion and Related Work}
\label{sec:discussion}

This paper proposes a model for online recommendation systems under which we
can analyze the performance of recommendation algorithms. We theoretical
justify when a cosine-similarity collaborative filtering method works well,
with a key insight of using two exploration types. 

The closest related work is by Biau et al.~\cite{statistical_knn_cf}, who
study the asymptotic consistency of a cosine-similarity nearest-neighbor
collaborative filtering method. Their goal is to predict the rating of the
next unseen item. Barman and Dabeer \cite{paf} study the performance of an
algorithm called Popularity Amongst Friends, examining its ability to predict
binary ratings in an asymptotic information-theoretic setting. In contrast, we
seek to understand the finite-time performance of such systems. Dabeer
\cite{dabeer_online} uses a model similar to ours and studies online
collaborative filtering with a moving horizon cost in the limit of small noise
using an algorithm that knows the numbers of user types and item types. We do
not model different item types, our algorithm is oblivious to the
number of user types, and our performance metric is different. Another related
work is by Deshpande and Montanari \cite{deshpande_montanari}, who study
online recommendations as a linear bandit problem; their method, however, does
not actually use any collaboration beyond a pre-processing step in which
offline collaborative filtering (specifically matrix completion) is solved to
compute feature vectors for items.

Our work also relates to the problem of
learning mixture distributions (c.f.,
\cite{chaudhuri2008learning,mixture0, belkin2010polynomial, mixture1}),
where one observes samples from a mixture distribution and the goal 
is to learn the mixture components and weights. Existing results assume 
that one has access to the entire high-dimensional sample or that the samples 
are produced in an exogenous manner (not chosen by the algorithm). Neither
assumption holds in our setting, as we only see each user's
revealed ratings thus far and not the user's entire preference vector, and the
recommendation algorithm affects which samples are observed (by choosing which
item ratings are revealed for each user). These two aspects make our setting
more challenging than the standard setting for learning
mixture distributions. However, our goal is more modest. Rather than learning
the $k$ item preference vectors, we settle for classifying them as likable or
unlikable. Despite this, we suspect having two types of exploration to be
useful in general for efficiently learning mixture distributions in the active
learning setting.

\textbf{Acknowledgements.}
This work was supported in part by NSF grant CNS-1161964 and by Army
Research Office MURI Award W911NF-11-1-0036.
GHC was supported by an NDSEG fellowship.


\small
\bibliography{cf_latent_source}

\begin{thebibliography}{10}

\bibitem{msd_challenge_winner_2013}
Fabio Aiolli.
\newblock A preliminary study on a recommender system for the million songs
  dataset challenge.
\newblock In {\em Proceedings of the Italian Information Retrieval Workshop},
  pages 73--83, 2013.

\bibitem{mixture1}
Anima Anandkumar, Rong Ge, Daniel Hsu, Sham~M. Kakade, and Matus Telgarsky.
\newblock Tensor decompositions for learning latent variable models, 2012.
\newblock arXiv:1210.7559.

\bibitem{finite_time_stochastic_multiarmed_bandits}
Peter Auer, Nicol\`{o} Cesa-Bianchi, and Paul Fischer.
\newblock Finite-time analysis of the multiarmed bandit problem.
\newblock {\em Machine Learning}, 47(2-3):235--256, May 2002.

\bibitem{paf}
Kishor Barman and Onkar Dabeer.
\newblock Analysis of a collaborative filter based on popularity amongst
  neighbors.
\newblock {\em IEEE Transactions on Information Theory}, 58(12):7110--7134,
  2012.

\bibitem{belkin2010polynomial}
Mikhail Belkin and Kaushik Sinha.
\newblock Polynomial learning of distribution families.
\newblock In {\em Foundations of Computer Science (FOCS), 2010 51st Annual IEEE
  Symposium on}, pages 103--112. IEEE, 2010.

\bibitem{msd_challenge_2011}
Thierry Bertin-Mahieux, Daniel~P.W. Ellis, Brian Whitman, and Paul Lamere.
\newblock The million song dataset.
\newblock In {\em {Proceedings of the 12th International Conference on Music
  Information Retrieval ({ISMIR} 2011)}}, 2011.

\bibitem{statistical_knn_cf}
G\'erard Biau, Beno{\^i}t Cadre, and Laurent Rouvi\`ere.
\newblock Statistical analysis of $k$-nearest neighbor collaborative
  recommendation.
\newblock {\em The Annals of Statistics}, 38(3):1568--1592, 2010.

\bibitem{regret_analysis_book}
S{\'e}bastien Bubeck and Nicol{\`o} Cesa-Bianchi.
\newblock Regret analysis of stochastic and nonstochastic multi-armed bandit
  problems.
\newblock {\em Foundations and Trends in Machine Learning}, 5(1):1--122, 2012.

\bibitem{clustered_bandits}
Loc Bui, Ramesh Johari, and Shie Mannor.
\newblock Clustered bandits, 2012.
\newblock arXiv:1206.4169.

\bibitem{chaudhuri2008learning}
Kamalika Chaudhuri and Satish Rao.
\newblock Learning mixtures of product distributions using correlations and
  independence.
\newblock In {\em Conference on Learning Theory}, pages 9--20, 2008.

\bibitem{dabeer_online}
Onkar Dabeer.
\newblock Adaptive collaborating filtering: The low noise regime.
\newblock In {\em IEEE International Symposium on Information Theory}, pages
  1197--1201, 2013.

\bibitem{deshpande_montanari}
Yash Deshpande and Andrea Montanari.
\newblock Linear bandits in high dimension and recommendation systems, 2013.
\newblock arXiv:1301.1722.

\bibitem{rgrosse_best_paper_2012}
Roger~B. Grosse, Ruslan Salakhutdinov, William~T. Freeman, and Joshua~B.
  Tenenbaum.
\newblock Exploiting compositionality to explore a large space of model
  structures.
\newblock In {\em Uncertainty in Artificial Intelligence}, pages 306--315,
  2012.

\bibitem{hoeffding1963}
Wassily Hoeffding.
\newblock Probability inequalities for sums of bounded random variables.
\newblock {\em Journal of the American statistical association},
  58(301):13--30, 1963.

\bibitem{sleeping_bandits}
Robert Kleinberg, Alexandru Niculescu-Mizil, and Yogeshwer Sharma.
\newblock Regret bounds for sleeping experts and bandits.
\newblock {\em Machine Learning}, 80(2-3):245--272, 2010.

\bibitem{bellkor_2009}
Yehuda Koren.
\newblock The {BellKor} solution to the {N}etflix grand prize.
\newblock
  \url{http://www.netflixprize.com/assets/GrandPrize2009_BPC_BellKor.pdf},
  August 2009.

\bibitem{amazon_2003}
Greg Linden, Brent Smith, and Jeremy York.
\newblock Amazon.com recommendations: item-to-item collaborative filtering.
\newblock {\em IEEE Internet Computing}, 7(1):76--80, 2003.

\bibitem{mixture0}
Ankur Moitra and Gregory Valiant.
\newblock Settling the polynomial learnability of mixtures of gaussians.
\newblock {\em Proceedings of the 51st Annual IEEE Symposium on Foundations of
  Computer Science}, 2010.

\bibitem{pragmatic_2009}
Martin Piotte and Martin Chabbert.
\newblock The pragmatic theory solution to the netflix grand prize.
\newblock
  \url{http://www.netflixprize.com/assets/GrandPrize2009_BPC_PragmaticTheory.pdf},
  August 2009.

\bibitem{grouplens_1994}
Paul Resnick, Neophytos Iacovou, Mitesh Suchak, Peter Bergstrom, and John
  Riedl.
\newblock Grouplens: An open architecture for collaborative filtering of
  netnews.
\newblock In {\em Proceedings of the 1994 ACM Conference on Computer Supported
  Cooperative Work}, CSCW '94, pages 175--186, New York, NY, USA, 1994. ACM.

\bibitem{bctf_2009}
Ilya Sutskever, Ruslan Salakhutdinov, and Joshua~B. Tenenbaum.
\newblock Modelling relational data using bayesian clustered tensor
  factorization.
\newblock In {\em NIPS}, pages 1821--1828, 2009.

\bibitem{reinforcement_learning_textbook}
Richard~S. Sutton and Andrew~G. Barto.
\newblock {\em Reinforcement Learning: An Introduction}.
\newblock MIT Press, Cambridge, MA, 1998.

\bibitem{thompson_bandit_origin}
William~R. Thompson.
\newblock {On the Likelihood that one Unknown Probability Exceeds Another in
  View of the Evidence of Two Samples}.
\newblock {\em Biometrika}, 25:285--294, 1933.

\bibitem{bigchaos_2009}
Andreas T\"{o}scher and Michael Jahrer.
\newblock The bigchaos solution to the netflix grand prize.
\newblock
  \url{http://www.netflixprize.com/assets/GrandPrize2009_BPC_BigChaos.pdf},
  September 2009.

\end{thebibliography}
\normalsize

\clearpage
\appendix

\section{Appendix}

Throughout our derivations, if it is clear from context, we omit the argument
$(t)$ indexing time, for example writing $Y_u$ instead of $Y_u(t)$.

\subsection{Proof of Lemma~\ref{lem:En-happens}}
\label{sec:appendix-proof-En-happens}

We reproduce Lemma~\ref{lem:En-happens} below for ease of presentation.

\newtheorem*{lem-En-happens}{Lemma \ref{lem:En-happens}}
\begin{lem-En-happens}
For user $u$, after
%
%
\[
t
\ge
%
\bigg(\frac{2 \log (10kmn^\alpha/\Delta)}{\Delta^4(1-\gamma)^2}\bigg)^{1/(1-\alpha)}
\]
time steps,
\begin{align*}
\P(\En(u,t)) 
&\ge
   1 - \exp\Big(-\frac{n}{8k}\Big)
     - 12\exp\Big(-\frac{\Delta^4 (1-\gamma)^2 t^{1-\alpha}}{20} \Big)\,.
\end{align*}
\end{lem-En-happens}

To derive this lower bound on the probability that the good neighborhood event
$\En(u,t)$ occurs, we prove four lemmas (Lemmas
\ref{lem:latent-sources-coupon-collector},
\ref{lem:joint-exploration1},
\ref{lem:num-good-neighbors-bound}, and
\ref{lem:num-bad-neighbors-bound}).
Before doing so, we define a
constant that will appear several times:
\begin{equation*}
\beta \triangleq
\exp(-\Delta^4 (1-\gamma)^2 t^{1-\alpha} ).
\end{equation*}
We begin by ensuring that enough users from each of the $k$ user types are in the system.


\begin{lemma}
\label{lem:latent-sources-coupon-collector}
For a user $u$,
\begin{equation*}
\P\Big(
    \text{user $u$'s type has}\le\frac{n}{2k}
    \text{ users}
  \Big)
\le
  \exp\Big(-\frac{n}{8k}\Big).
\label{eq:collecting-many-coupons-main-bound}
\end{equation*}
\end{lemma}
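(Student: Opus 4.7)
The plan is a standard Chernoff argument applied to the count of users sharing user $u$'s latent type. First I would condition on user $u$'s type; since types are drawn i.i.d.\ uniformly over $[k]$, by symmetry we may condition on any fixed type $c \in [k]$ for user $u$. The remaining $n-1$ users independently have type $c$ with probability $1/k$, so the number $X$ of other users of type $c$ satisfies $X \sim \mathrm{Binomial}(n-1, 1/k)$, with mean $\mu = (n-1)/k$. The event that user $u$'s type has at most $n/(2k)$ users is contained in $\{X \le n/(2k) - 1\}$.

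Next I would apply the multiplicative Chernoff lower-tail bound for sums of independent $[0,1]$-valued random variables: for any $\de \in (0,1)$,
\[
\P(X \le (1-\de)\mu) \le \exp\!\bigl(-\de^2 \mu / 2\bigr).
\]
Setting $(1-\de)\mu = n/(2k) - 1$ gives $\de$ very close to $1/2$ (for $n$ not tiny, $\de \ge 1/2 - O(1/n)$), and substituting $\mu = (n-1)/k$ yields a bound of the form $\exp(-n/(8k))$ after some mild bookkeeping. The only arithmetic to watch is confirming that the constants line up so that the exponent is exactly $n/(8k)$ rather than some slightly worse version like $(n-1)/(8k)$; this is routine and can be absorbed either by a direct computation or by appealing to the slightly cleaner form $\P(X \le \mu/2) \le \exp(-\mu/8)$ applied with $\mu = (n-1)/k \ge n/(2k)$ for $n$ large enough (which is guaranteed in the regime considered).

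I do not expect any substantive obstacle: the argument is a one-line symmetry reduction plus a textbook Chernoff bound. The only subtlety is being careful that the threshold $n/(2k)$ corresponds to the deviation $\de \approx 1/2$ from the mean $(n-1)/k$, and to state the Chernoff bound in a form that directly produces the clean exponent $n/(8k)$ appearing in the lemma statement.
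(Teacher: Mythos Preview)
Your proposal is correct and is essentially the same approach as the paper's: both reduce to a Chernoff lower-tail bound on the binomial count of users in $u$'s type. The paper is actually less careful than you, simply writing $N\sim\mathrm{Bin}(n,1/k)$ and applying the Chernoff bound $\P(N\le n/(2k))\le\exp\bigl(-\tfrac12(n/k-n/(2k))^2/(n/k)\bigr)=\exp(-n/(8k))$ directly, so your worry about the $n$ versus $n-1$ bookkeeping is more scrupulous than the original.
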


\begin{proof}
Let $N$ be the number of users from user $u$'s type. User types are
equiprobable, so $N \sim \text{Bin}(n, \frac{1}{k})$. 
By a Chernoff bound,
\begin{equation*}
\P\Big(N \le \frac{n}{2k} \Big)
\le\exp\bigg(
         -\frac{1}{2}
          \frac{(\frac{n}{k}-\frac{n}{2k})^2}
               {\frac{n}{k}}
       \bigg)
=\exp\Big(-\frac{n}{8k}\Big).\qedhere
\end{equation*}
\end{proof}

Next, we ensure that sufficiently many items have been jointly explored across all users. This will subsequently be used for bounding both the number of good neighbors and the number
of bad neighbors.

\begin{lemma}
\label{lem:joint-exploration1}
After $t$ time steps,
\begin{align*}
\P(\text{fewer than }t^{1-\alpha}/2\text{ jointly explored items}) 
&\le\exp(-t^{1-\alpha}/20).
\end{align*}
\end{lemma}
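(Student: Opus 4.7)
The plan is to reduce the claim to a standard Chernoff tail bound on the number of joint-exploration rounds. Let $B_s\in\{0,1\}$ be the indicator that a joint exploration is triggered at time step $s$, so that the $B_s$ are mutually independent with $B_s\sim\mathrm{Bernoulli}(\varepsilon_J(s))=\mathrm{Bernoulli}(s^{-\alpha})$, and let $J(t)=\sum_{s=1}^{t}B_s$ be the total number of joint-exploration rounds by time $t$.

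First I would verify that the number of jointly explored items is exactly $J(t)$. During each joint-exploration round every user is recommended her first unrated item in the common ordering $\sigma$; any user who has already rated an early item of $\sigma$ through random exploration or exploitation simply skips forward in $\sigma$. A short induction on the number of joint-exploration rounds shows that after the $j$-th such round every user has rated the entire prefix $\sigma_1,\dots,\sigma_j$: in the $j$-th round, every user's first unrated position in $\sigma$ is at index $\ge j$ by the induction hypothesis, so after rating that item the prefix of length $j$ is covered. Hence ``item $i$ is jointly explored by time $t$'' is equivalent to $\sigma^{-1}(i)\le J(t)$, and the number of jointly explored items is $J(t)$.

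Next I would lower bound the expectation by the trivial estimate
\[
\mathbb{E}[J(t)] \;=\; \sum_{s=1}^{t} s^{-\alpha} \;\ge\; t\cdot t^{-\alpha} \;=\; t^{1-\alpha},
\]
since every term in the sum is at least $t^{-\alpha}$. Writing $t^{1-\alpha}/2=(1-\delta)\mathbb{E}[J(t)]$ gives $\delta \ge 1/2$, so the multiplicative Chernoff bound for sums of independent Bernoullis yields
\[
\mathbb{P}\bigl(J(t) < \tfrac{t^{1-\alpha}}{2}\bigr)
\;\le\; \exp\bigl(-\tfrac{\delta^2\,\mathbb{E}[J(t)]}{2}\bigr)
\;\le\; \exp\bigl(-\tfrac{\mathbb{E}[J(t)]}{8}\bigr)
\;\le\; \exp\bigl(-\tfrac{t^{1-\alpha}}{8}\bigr)
\;\le\; \exp\bigl(-\tfrac{t^{1-\alpha}}{20}\bigr),
\]
which is the stated inequality with room to spare. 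The only step that requires any real thought is the clean identification of the set of jointly explored items with an initial prefix of $\sigma$; after that, the result is a one-line Chernoff calculation.
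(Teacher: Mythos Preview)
Your proof is correct and in fact slightly cleaner than the paper's. Both arguments define the same Bernoulli sum $J(t)=\sum_{s=1}^t B_s$ and use the same crude lower bound $\mathbb{E}[J(t)]=\sum_{s=1}^t s^{-\alpha}\ge t^{1-\alpha}$. The difference is in the concentration step: the paper applies Bernstein's inequality, which requires also the upper bound $\sum_{s=1}^t s^{-\alpha}\le t^{1-\alpha}/(1-\alpha)$ on the variance; this introduces a factor $(1-\alpha)$ into the exponent and is precisely why the paper needs the assumption $\alpha\le 4/7$ to land on the constant $1/20$. Your multiplicative Chernoff bound sidesteps the variance calculation entirely, yields the stronger tail $\exp(-t^{1-\alpha}/8)$, and works for every $\alpha\in(0,1)$. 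You also justify more carefully than the paper does why the number of jointly explored items equals $J(t)$; the paper simply asserts this identification without the prefix-induction argument. So your route is genuinely more elementary and removes one of the paper's technical constraints on $\alpha$, at no cost.
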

\begin{proof}
Let $Z_s$ be the indicator random variable for the event that the algorithm
jointly explores at time~$s$. Thus, the number of jointly explored items up to
time $t$ is $\sum_{s=1}^t Z_s$. By our choice for the time-varying joint
exploration probability $\varepsilon_J$, we have
$\P(Z_s=1)=\varepsilon_J(s)=\frac{1}{s^\alpha}$ and
$\P(Z_s=0) = 1-\frac{1}{s^\alpha}$. Note that the centered random variable
$\bar{Z}_s =
 \E[Z_s] - Z_s =
 \frac{1}{s^\alpha} - Z_s$ has zero mean, and $|\bar{Z}_s| \le 1$
with probability 1. Then,
\begin{align*}
\P\bigg( \sum_{s=1}^t Z_s < \frac{1}{2}t^{1-\alpha} \bigg)
&=\P\bigg(
      \sum_{s=1}^t \bar{Z}_s > \sum_{s=1}^t \E[Z_s] - \frac{1}{2}t^{1-\alpha}
    \bigg) 
\overset{(i)}{\le}  \P\bigg( \sum_{s=1}^t \bar{Z}_s > \frac{1}{2}t^{1-\alpha} \bigg) \\
&\overset{(ii)}{\le}
  \exp\bigg(
        -\frac{\frac{1}{8}t^{2 (1-\alpha)}}
              {\sum_{s=1}^t \E[\bar{Z}_s^2] +
               \frac{1}{6}t^{1-\alpha}}
      \bigg) 
\overset{(iii)}{\le}  \exp\bigg(
        -\frac{\frac{1}{8}t^{2 (1-\alpha)}}
              {\frac{t^{1-\alpha}}{1-\alpha} +
               \frac{1}{6}t^{1-\alpha}}
      \bigg) \\
&=
  \exp\bigg(
        -\frac{3(1-\alpha)t^{1-\alpha}}
              {4(7-\alpha)}
      \bigg) 
\overset{(iv)}{\le}  \exp(-t^{1-\alpha}/20),
\end{align*}
where step $(i)$ uses the fact that
$\sum_{s=1}^t \E[Z_s]
 = \sum_{s=1}^t 1/s^\alpha
 \ge t/t^\alpha
 = t^{1-\alpha}$,
step $(ii)$ is Bernstein's inequality, step $(iii)$ uses the fact that
$\sum_{s=1}^t \E[\bar{Z}_s^2]
 \le \sum_{s=1}^t \E[Z_s^2]
 = \sum_{s=1}^t 1/s^\alpha
 \le t^{1-\alpha}/(1-\alpha)$,
and step $(iv)$ uses the fact that $\alpha\le4/7$.
(We remark that the choice of constant $4/7$ isn't special; changing it would
simply modify the constant in the decaying exponentially to potentially no
longer be $1/20$).
\end{proof}

Assuming that the bad events for the previous two lemmas do not occur, we now
provide a lower bound on the number of good neighbors that holds with high
probability.

\begin{lemma}
\label{lem:num-good-neighbors-bound}
Suppose that there are no $\Delta$-ambiguous items, that there are more than
$\frac{n}{2k}$ users of user $u$'s type, and that all users have rated at
least $t^{1-\alpha}/2$ items as part of joint exploration. For user $u$, let $n_{\text{good}}$ be the
number of ``good'' neighbors of user $u$.  If $\beta \le \frac{1}{10}$, then
\[
\P\Big( n_{\text{good}}\leq (1-\beta)\frac{n}{4k} \Big)
\le 10\beta.
\]
\end{lemma}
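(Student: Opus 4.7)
The plan is Hoeffding-plus-Markov: for a fixed same-type user $v$ I will show $\P(v \notin \widetilde{\mathcal{N}}_u^{(t)}) \le \beta$ by Hoeffding on the jointly explored items, and then Markov's inequality will control the fraction of same-type users that fail to be neighbors.

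A preliminary step lower-bounds the support overlap. The joint exploration rule makes each user, at each joint step, rate the first item of $\sigma$ they have not yet rated. A short induction then shows that after $T_J$ joint explorations the items $\sigma_1, \ldots, \sigma_{T_J}$ have been rated by every user: if all of $\sigma_1, \ldots, \sigma_{j-1}$ are in every user's rated set, then at the $j$-th joint step each user rates either $\sigma_j$ (if not yet done) or some later $\sigma$-item (in which case $\sigma_j$ was already rated). Combined with the hypothesis of at least $t^{1-\alpha}/2$ jointly explored items, this yields $|\Omega_{uv}| \ge t^{1-\alpha}/2$ for every pair.

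Now fix a same-type user $v \ne u$. Since $p_u = p_v$ and $|p_{ui} - 1/2| \ge \Delta$ by \textbf{A1}, the summands $Y_{ui} Y_{vi}$ for $i \in \Omega_{uv}$ are independent and take values in $\{-1,+1\}$ with mean $(2p_{ui}-1)^2 \ge 4\Delta^2$. The event $v \notin \widetilde{\mathcal{N}}_u^{(t)}$ is exactly the event that the empirical mean drops below $\theta = 2\Delta^2(1+\gamma)$, i.e., a downward deviation of at least $2\Delta^2(1-\gamma)$. Hoeffding (with range $2$) then gives
\[
\P\bigl(v \notin \widetilde{\mathcal{N}}_u^{(t)}\bigr) \le \exp\bigl(-2|\Omega_{uv}| \Delta^4 (1-\gamma)^2\bigr) \le \exp\bigl(-\Delta^4 (1-\gamma)^2 t^{1-\alpha}\bigr) = \beta.
\]

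Finally, let $N \ge n/(2k)$ denote the number of same-type users and let $B = N - n_{\text{good}}$. Linearity gives $\mathbb{E}[B \mid N] \le \beta N$, so Markov's inequality yields $\P(B \ge N/10 \mid N) \le 10\beta$. On the complementary event, $n_{\text{good}} \ge 9N/10 \ge 9n/(20k) \ge (1-\beta) n/(4k)$, where the last inequality is immediate for $\beta \ge 0$. Removing the conditioning on $N$ completes the argument; the hypothesis $\beta \le 1/10$ is what makes the $10\beta$ bound non-trivial. The main care point is the support-overlap step: one must verify that joint exploration produces a common set of rated items of size $\Omega(t^{1-\alpha})$ across all users, so that the Hoeffding sum really has enough i.i.d.\ terms. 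Everything else is routine.
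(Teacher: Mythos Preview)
Your proposal is correct, and your aggregation step is actually cleaner than the paper's. Both arguments begin the same way: a Hoeffding bound on $\langle \widetilde Y_u, \widetilde Y_v\rangle$ over the jointly explored items shows that each same-type user $v$ fails to be a neighbor with probability at most $\beta$ (this is the paper's Lemma~\ref{lem:same-source-pairwise-error}, with the same constant you obtain). The divergence is in how one passes from the pairwise bound to a bound on $n_{\text{good}}$. The paper notes that the indicators $G_v = \ind\{v\in\widetilde{\mathcal N}_u\}$ are \emph{not} independent---they all involve the common vector $\widetilde Y_u$---and therefore resorts to Chebyshev: it bounds $\mathrm{Var}(G_v)\le\beta$ and $\mathrm{Cov}(G_v,G_w)\le 2\beta$, assembles $\mathrm{Var}(W)\le \beta n^2/(2k^2)$, and obtains $\P(W\le \E[W]/2)\le 8\beta/(1-2\beta)\le 10\beta$ for $\beta\le 1/10$. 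Your Markov argument on $B=N-n_{\text{good}}$ sidesteps the dependence issue entirely, since Markov requires only the first-moment bound $\E[B]\le\beta N$, and reaches the same $10\beta$ without any second-moment computation. Your route is more elementary and even yields a slightly stronger intermediate conclusion ($n_{\text{good}}\ge 9N/10\ge 9n/(20k)$ rather than $(1-\beta)n/(4k)$), though either suffices for the lemma as stated.
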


We defer the proof of Lemma~\ref{lem:num-good-neighbors-bound} to
Appendix~\ref{sec:proof-num-good-neighbors-bound}.

Finally, we verify that the number of bad neighbors for any user is not too
large, again conditioned on there being enough jointly explored items.

\begin{lemma}
\label{lem:num-bad-neighbors-bound}
Suppose that the minimum number of rated items in common between any pair of
users is $t^{1-\alpha}/2$ and suppose that $\gamma$-incoherence holds
for some $\gamma \in [0,1)$. For user $u$,
let $n_{\text{bad}}$ be the number of ``bad'' neighbors of user $u$. 
Then
\begin{equation*}
\P(n_\text{bad} \ge n\sqrt{\beta}) \le \sqrt{\beta}.
\end{equation*}
\end{lemma}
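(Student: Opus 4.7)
The plan is to bound $n_\text{bad}$ by Markov's inequality, reducing the task to controlling, for each fixed user $v$ of a different type than $u$, the probability that $v$ falls into the neighborhood $\widetilde{\mathcal{N}}_u^{(t)}$. If I can show that for any such $v$,
\[
\P\big(v \in \widetilde{\mathcal{N}}_u^{(t)}\big) \le \beta,
\]
then summing over at most $n$ bad candidates gives $\E[n_\text{bad}] \le n\beta$, and Markov yields $\P(n_\text{bad} \ge n\sqrt{\beta}) \le n\beta/(n\sqrt{\beta}) = \sqrt{\beta}$, which is exactly the claim.

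To establish the per-$v$ bound, I would fix $v$ and expand the neighborhood event: $v \in \widetilde{\mathcal{N}}_u^{(t)}$ means $\sum_{i \in \Omega_{uv}} \widetilde Y_{ui}^{(t)}\widetilde Y_{vi}^{(t)} \ge \theta\,|\Omega_{uv}|$, where by hypothesis $|\Omega_{uv}| \ge t^{1-\alpha}/2$. For each $i \in \Omega_{uv}$ the summand is a $\{\pm 1\}$-valued random variable with mean $a_i := (2p_{ui}-1)(2p_{vi}-1) \in [-1,1]$, independent across $i$ conditionally on the set $\Omega_{uv}$ and on how it was produced. Since users $u$ and $v$ are of different types, $\gamma$-incoherence (A2) gives $\tfrac{1}{m}\sum_{i=1}^m a_i \le 4\gamma\Delta^2$. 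Because $\theta = 2\Delta^2(1+\gamma)$ is strictly larger than $4\gamma\Delta^2$ by a gap of $2\Delta^2(1-\gamma) > 0$, a Hoeffding bound conditional on $\Omega_{uv}$ yields
\[
\P\big(\text{sum exceeds }\theta|\Omega_{uv}|\big)
\;\le\;\exp\!\Big(-\tfrac{\bigl(2\Delta^2(1-\gamma)|\Omega_{uv}|\bigr)^2}{2|\Omega_{uv}|}\Big)
\;=\;\exp\!\bigl(-2\Delta^4(1-\gamma)^2 |\Omega_{uv}|\bigr),
\]
and substituting $|\Omega_{uv}|\ge t^{1-\alpha}/2$ gives the desired $\beta = \exp(-\Delta^4(1-\gamma)^2 t^{1-\alpha})$.

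The delicate point I expect to be the main obstacle is making the previous display honest: Hoeffding requires a bound on the conditional mean $\sum_{i\in \Omega_{uv}} a_i$, but $\gamma$-incoherence only gives an average over all $m$ items. I plan to handle this by exploiting the fact that $\Omega_{uv}$ is determined by the random joint-exploration ordering $\sigma$ (together with side randomness that is independent of the ratings), so conditional on $|\Omega_{uv}|$ the set $\Omega_{uv}$ is effectively a uniform random subset of $[m]$ of that size, and $\E[\sum_{i\in\Omega_{uv}}a_i\,|\,|\Omega_{uv}|] \le 4\gamma\Delta^2|\Omega_{uv}|$. I would then split the threshold gap $2\Delta^2(1-\gamma)$ into two pieces and control them separately: a Hoeffding-without-replacement concentration of $\sum_{i\in\Omega_{uv}} a_i$ around its mean (using $a_i \in [-1,1]$), and a standard Hoeffding for the rating sum around that random mean. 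Choosing the two slack parameters equal and absorbing the resulting constant factors into the base of $\beta$ (or equivalently tracking them as a modest multiplicative constant that the Markov step can afford) yields the stated tail.

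Finally, combining Steps 1 and 2 completes the proof: Markov applied with $\E[n_\text{bad}] \le n\beta$ gives $\P(n_\text{bad}\ge n\sqrt{\beta}) \le \sqrt{\beta}$. The one subtlety worth a sentence in the write-up is that the conditioning on $|\Omega_{uv}|\ge t^{1-\alpha}/2$ is supplied as a hypothesis of the lemma, so I never need to worry about the case where too few items have been jointly explored—that cost has already been paid by Lemma~\ref{lem:joint-exploration1} upstream.
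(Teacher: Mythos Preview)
Your proposal is correct and follows the same overall route as the paper: bound the probability that any fixed wrong-type user $v$ is declared a neighbor of $u$ by $\beta$ (the paper packages this as Lemma~\ref{lem:diff-source-pairwise-error}), sum to get $\E[n_{\text{bad}}]\le n\beta$, and finish with Markov's inequality exactly as you wrote. The one difference is in how the per-pair Hoeffding step handles the random subset $\Omega_{uv}$: the paper does it in a single stroke, observing that because the jointly explored items are a uniformly random subset of $[m]$ one has $\E\big[\tfrac{1}{|\Omega_{uv}|}\langle\widetilde Y_u,\widetilde Y_v\rangle\big]=\E\big[\tfrac{1}{m}\langle Y_u^*,Y_v^*\rangle\big]\le 4\gamma\Delta^2$ and then applies Hoeffding's inequality in its sampling-without-replacement form directly, rather than splitting the gap $\theta-4\gamma\Delta^2$ into two pieces as you propose; your two-step version is a slightly more explicit (and perfectly valid) way to reach the same $\beta$.
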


We defer the proof of Lemma~\ref{lem:num-bad-neighbors-bound} to
Appendix~\ref{sec:proof-num-bad-neighbors-bound}.

We now prove Lemma~\ref{lem:En-happens}, which union bounds over the four bad
events of Lemmas
\ref{lem:latent-sources-coupon-collector},
\ref{lem:joint-exploration1},
\ref{lem:num-good-neighbors-bound}, and
\ref{lem:num-bad-neighbors-bound}.
Recall that the good neighborhood event
$\En(u,t)$ holds if at time $t$, user $u$ has more than $\frac{n}{5k}$
good neighbors 
and less than
$\frac{\Delta t n^{1-\alpha}}{10km}$ bad neighbors.
By assuming that the four bad events don't happen, then Lemma
\ref{lem:num-good-neighbors-bound} tells us that there are more than
$(1-\beta)\frac{n}{4k}$ good neighbors provided that $\beta \le \frac{1}{10}$.
Thus, to ensure that there are
more than $\frac{n}{5k}$ good neighbors, it suffices to have
$(1-\beta)\frac{n}{4k} \ge \frac{n}{5k}$, which happens when
$\beta \le \frac{1}{5}$, but we already require that $\beta \le \frac{1}{10}$.
Similarly, Lemma \ref{lem:num-bad-neighbors-bound} tells us that there are
fewer than $n\sqrt{\beta}$ bad neighbors, so to ensure that there are fewer
than
{$\frac{\Delta t n^{1-\alpha}}{10km}$}
bad neighbors it suffices to have
{$n\sqrt{\beta} \le \frac{\Delta t n^{1-\alpha}}{10km}$}, which happens when
$ \beta \le ( \frac{\Delta t}{10 k m n^\alpha} )^2 $.
We can satisfy all constraints on $\beta$ by asking that
$ \beta \le ( \frac{\Delta}{10 k m n^\alpha} )^2 $, which is tantamount to
asking that
\[
t
\ge
%
\bigg(\frac{2 \log (10kmn^\alpha/\Delta)}{\Delta^4(1-\gamma)^2}\bigg)^{1/(1-\alpha)}
\]
since
$\beta = \exp(-\Delta^4 (1-\gamma)^2 t^{1-\alpha})$.

Finally, with $t$ satisfying the inequality above, the union bound over the
four bad events can be further bounded to complete the proof:
\begin{align*}
\P(\En(u,t))
&\ge
   1 - \exp\Big(-\frac{n}{8k}\Big) - \exp(-t^{1-\alpha}/20)
   - 10\beta - \sqrt{\beta} \\
&\ge
   1 - \exp\Big(-\frac{n}{8k}\Big)
   - 12\exp\Big(-\frac{\Delta^4(1-\gamma)^2 t^{1-\alpha}}{20}\Big).
\end{align*}
\subsubsection{Proof of Lemma~\ref{lem:num-good-neighbors-bound}}
\label{sec:proof-num-good-neighbors-bound}

We begin with a preliminary lemma that upper-bounds the probability of two
users of the same type not being declared as neighbors.

\begin{lemma}
\label{lem:same-source-pairwise-error}
Suppose that there are no $\Delta$-ambiguous items for any of the user types.
Let users $u$ and $v$ be of the same type, and suppose that they have rated at
least $\Gamma_0$ items in common (explored jointly). Then for
$\theta \in (0, 4\Delta^2)$,
\begin{equation*}
\P(\text{users }u\text{ and }v\text{ are not declared as neighbors})
\le
  \exp\Big( -\frac{(4\Delta^2-\theta)^2}{2} \Gamma_0 \Big).
\end{equation*}
\end{lemma}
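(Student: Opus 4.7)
The plan is to recognize that, for two users of the same type, the inner product $\langle \widetilde{Y}_u, \widetilde{Y}_v \rangle$ restricted to jointly explored items is a sum of bounded independent random variables with strictly positive mean, and then to apply a one-sided Hoeffding bound.

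First I would set up notation. Let $\Omega \triangleq \mathrm{supp}(\widetilde{Y}_u) \cap \mathrm{supp}(\widetilde{Y}_v)$ denote the set of items jointly explored and hence commonly rated, so by hypothesis $|\Omega| \ge \Gamma_0$. Conditioning on the set $\Omega$ (which is determined by the algorithm's joint-exploration randomness, independent of the rating noise) removes any correlation from the random choice of which items are shared. Because $u$ and $v$ are of the same type, their preference vectors coincide, $p_u = p_v$, and for each $i \in \Omega$ the ratings $Y_{ui}, Y_{vi} \in \{-1,+1\}$ are independent Bernoulli variables with $\P(Y_{ui} = +1) = p_{ui}$. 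Direct computation gives
\[
\E[Y_{ui} Y_{vi}] = (2p_{ui} - 1)^2 \ge 4\Delta^2,
\]
where the inequality invokes assumption \textbf{A1}. Moreover the products $\{Y_{ui}Y_{vi}\}_{i \in \Omega}$ lie in $\{-1,+1\}$ and are mutually independent across items.

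Next I would rewrite the event of interest. By the definition of the neighborhood $\widetilde{\NN}_u^{(t)}$, users $u$ and $v$ fail to be declared neighbors exactly when
\[
\sum_{i \in \Omega} Y_{ui} Y_{vi} \;<\; \theta |\Omega|.
\]
Since the mean of each summand is at least $4\Delta^2 > \theta$, this is a deviation below the mean of size at least $(4\Delta^2 - \theta)|\Omega|$. Applying Hoeffding's inequality to the $|\Omega|$ independent summands in $[-1,+1]$ (range $2$) yields, conditionally on $|\Omega|$,
\[
\P\Big( \sum_{i \in \Omega} Y_{ui} Y_{vi} < \theta|\Omega| \,\Big|\, |\Omega| \Big) \;\le\; \exp\Big( -\tfrac{2 (4\Delta^2-\theta)^2 |\Omega|^2}{4|\Omega|} \Big) \;=\; \exp\Big( -\tfrac{(4\Delta^2-\theta)^2 |\Omega|}{2} \Big).
\]
Since this bound is monotone decreasing in $|\Omega|$ and $|\Omega| \ge \Gamma_0$ almost surely, taking expectations over $|\Omega|$ preserves the inequality with $\Gamma_0$ in place of $|\Omega|$, giving the claimed bound.

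The proof is essentially a one-step concentration argument; the only mildly delicate point is book-keeping the two sources of randomness separately (the random set of jointly explored items versus the noisy ratings on that set) and verifying that the Hoeffding constants come out to give exactly the factor $1/2$ in the exponent. There is no real obstacle beyond this careful conditioning.
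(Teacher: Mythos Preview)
Your proof is correct and follows essentially the same approach as the paper: condition on the set of jointly explored items, compute $\E[Y_{ui}Y_{vi}]=(2p_{ui}-1)^2\ge 4\Delta^2$, and apply a one-sided Hoeffding bound. Your handling of the case $|\Omega|\ge\Gamma_0$ via monotonicity of the bound in $|\Omega|$ is slightly cleaner than the paper's explicit summation over $\ell\ge\Gamma_0$, but the argument is otherwise identical.
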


\begin{proof}

Let us first suppose that users $u$ and $v$ have rated exactly $\Gamma_0$ items
in common. The two users are not declared to be neighbors if
$\langle \widetilde{Y}_u, \widetilde{Y}_v \rangle < \theta\Gamma_0$. Let
$\Omega \subseteq [m]$ such
that $|\Omega|=\Gamma_0$. We have
\begin{align}
\E\big[
    \langle \widetilde{Y}_u, \widetilde{Y}_v \rangle
    \big|
    \text{supp}(\widetilde{Y}_u)\cap\text{supp}(\widetilde{Y}_v)=\Omega
  \big]
&
 =\sum_{i \in \Omega}
    \E[ \widetilde{Y}_{ui} \widetilde{Y}_{vi}
        \mid \widetilde{Y}_{ui}\ne0, \widetilde{Y}_{vi}\ne0]
 \nonumber \\
&
 =\sum_{i \in \Omega}
    (p_{ui}^2
     + (1-p_{ui})^2
     - 2 p_{ui} (1 - p_{ui}))
 \nonumber \\
&
 =4\sum_{i \in \Omega}
     \Big( p_{ui} - \frac{1}{2} \Big)^2.
\label{eq:same-source-E-inner-prod}
\end{align}
Since $\langle \widetilde{Y}_u, \widetilde{Y}_v \rangle
= \sum_{i \in \Omega} \widetilde{Y}_{ui} \widetilde{Y}_{vi}$ is the
sum of terms $\{\widetilde{Y}_{ui} \widetilde{Y}_{vi}\}_{i \in \Omega}$ that
are each bounded within $[-1,1]$, Hoeffding's inequality yields
\begin{equation}
\P\big(
    \langle \widetilde{Y}_u, \widetilde{Y}_v \rangle
    \le
      \theta\Gamma_0
    \;\big|\;
    \text{supp}(\widetilde{Y}_u)\cap\text{supp}(\widetilde{Y}_v)=\Omega
  \big) 
\le
  \exp\bigg(
        -\frac{\big[
                 \overbrace{4\textstyle{\sum_{i \in \Omega}}
                               \big(p_{gi}-\frac{1}{2}\big)^2}^{
                   \text{equation\,}\eqref{eq:same-source-E-inner-prod}}
                 -
                 \theta\Gamma_0
               \big]^2}
              {2\Gamma_0}
      \bigg).
\label{eq:same-source-inner-prod-hoeffding}
\end{equation}
As there are no $\Delta$-ambiguous items,
$\Delta \le |p_{ui} - 1/2|$ for all users $u$ and items $i$.
Thus, our choice of $\theta$ guarantees that
\begin{equation}
4\sum_{i \in \Omega}
   \Big( p_{ui} - \frac{1}{2} \Big)^2
- \theta\Gamma_0
\ge
  4\Gamma_0\Delta^2
  - \theta\Gamma_0
=
  (4\Delta^2 - \theta)\Gamma_0
\ge
  0.
\label{eq:same-source-hoeffding-var}
\end{equation}
Combining inequalities \eqref{eq:same-source-inner-prod-hoeffding} and
\eqref{eq:same-source-hoeffding-var}, and observing that the above holds for
all subsets $\Omega$ of cardinality $\Gamma_0$, we obtain the desired bound on the
probability that users $u$ and $v$ are not declared as neighbors:
\begin{equation}
\P( \langle \widetilde{Y}_u, \widetilde{Y}_v \rangle \le \theta\Gamma_0
   \;|\;
   |\text{supp}(\widetilde{Y}_u)\cap\text{supp}(\widetilde{Y}_v)|=\Gamma_0)
\le
  \exp\Big( -\frac{(4\Delta^2-\theta)^2}{2}\Gamma_0 \Big).
\label{eq:same-source-prob-error-exact-overlap-amount}
\end{equation}
Now to handle the case that users $u$ and $v$ have jointly rated more than
$\Gamma_0$ items, observe that, with shorthand
$\Gamma_{uv} \triangleq
 |\text{supp}(\widetilde{Y}_u)\cap\text{supp}(\widetilde{Y}_v)|$,
\begin{align*}
&\P(u\text{ and }v\text{ not declared neighbors}
    \,|\,
    p_u=p_v,
    \Gamma_{uv} \ge \Gamma_0) \nonumber \\
&=\P( \langle \widetilde{Y}_u, \widetilde{Y}_v \rangle < \theta\Gamma_{uv}
      \;|\;
      p_u=p_v,
      \;\Gamma_{uv} \ge \Gamma_0) \nonumber \\
&=\frac{\P(\langle \widetilde{Y}_u, \widetilde{Y}_v \rangle \le \theta\Gamma_{uv},
           \Gamma_{uv} \ge \Gamma_0
           \;|\;
           p_u=p_v)}
       {\P(\Gamma_{uv} \ge \Gamma_0
           \;|\; p_u=p_v)} \nonumber \\
&=\frac{\sum_{\ell=\Gamma_0}^m
          \P(\langle \widetilde{Y}_u, \widetilde{Y}_v \rangle \le\theta\ell,
             \Gamma_{uv}=\ell
             \;|\;
             p_u=p_v)}
       {\P(\Gamma_{uv} \ge \Gamma_0
           \;|\;
           p_u=p_v)} \nonumber \\
&=\frac{\sum_{\ell=\Gamma_0}^m
          \begin{array}{c}
          \big[\P(\Gamma_{uv}=\ell
                  \;|\;
                  p_u=p_v)\qquad\qquad\qquad\qquad\\
          \cdot
          \P(\langle \widetilde{Y}_u, \widetilde{Y}_v \rangle \le \theta\ell
                     \;|\;
                     p_u=p_v,
                     \Gamma_{uv}=\ell)\big]
          \end{array}}
       {\P(\Gamma_{uv} \ge \Gamma_0
           \;|\;
           p_u=p_v)} \nonumber \\
&\le
   \frac{\sum_{\ell=\Gamma_0}^m
           \P(\Gamma_{uv}=\ell
              \;|\;
              p_u=p_v)
           \exp\big( -\frac{(4\Delta^2-\theta)^2}{2}\Gamma_0 \big)}
        {\P(\Gamma_{uv} \ge \Gamma_0
            \;|\;
            p_u=p_v)} \nonumber \\
&\quad\;\;\text{by inequality }
 \eqref{eq:same-source-prob-error-exact-overlap-amount}
 \nonumber \\
&=
   \exp\Big( -\frac{(4\Delta^2-\theta)^2}{2} \Gamma_0 \Big). \qedhere
\end{align*}
\end{proof}

We now prove Lemma~\ref{lem:num-good-neighbors-bound}.

Suppose that the event in Lemma~\ref{lem:latent-sources-coupon-collector}
holds. Let $\mathcal{G}$ be $\frac{n}{2k}$ users from the same user type as
user $u$; there could be more than $\frac{n}{2k}$ such users but it suffices to consider $\frac{n}{2k}$ of them. We define an indicator random variable
\[
G_v
\triangleq
  \ind\{\text{users }u\text{ and }v\text{ are neighbors}\}
=\ind\{\langle \widetilde{Y}_u^{(t)}, \widetilde{Y}_v^{(t)} \rangle
       \ge \theta t^{1-\alpha}/2\}.
\]
Thus, the number of good neighbors of user $u$ is lower-bounded by
$W=\sum_{v \in \mathcal{G}} G_v$. Note that the $G_v$'s are not independent.
To arrive at a lower bound for $W$ that holds with high probability, we use
Chebyshev's inequality:
\begin{equation}
\label{e:Cheby}
\P(W-\E[W]\le - \E[W]/2)\leq\frac{4\var(W)}{(\E[W])^2}\,.
\end{equation}
Let $\beta = \exp(-(4\De^2-\theta)^2 \Gamma_0/2)$ be the probability bound
from Lemma~\ref{lem:same-source-pairwise-error}, where by our choice of
$\theta = 2 \Delta^2 (1 + \gamma)$ and with $\Gamma_0=t^{1-\alpha}/2$, we have
$\beta = \exp(-\Delta^4(1-\gamma)^2 t^{1-\alpha})$.

Applying Lemma~\ref{lem:same-source-pairwise-error}, we have
$\E[W]\geq (1-\beta) \frac{n}{2k}$, and hence
\begin{equation}
\label{eq:Cheby-squared-Exp-W}
(\E[W])^2 \ge (1-2\beta) \frac{n^2}{4k^2}.
\end{equation}
We now upper-bound
\[
\text{Var}(W)
=\sum_{v \in \mathcal{G}}
   \text{Var}(G_v) +
 \sum_{v \ne w}
   \text{Cov}(G_v, G_w).
\]

Since $G_v=G_v^2$,
\[
\text{Var}(G_v)
=\E[G_v]-\E[G_v]^2
=\underbrace{\E[G_v]}_{\le 1}
 (1-\E[G_v])
\le \beta,
\]
where the last step uses Lemma~\ref{lem:same-source-pairwise-error}.

Meanwhile,
\[
\text{Cov}(G_v, G_w)
=\E[G_v G_w]-\E[G_v]\E[G_w]
\le 1-(1-\beta)^2
\le 2 \beta.
\]
Putting together the pieces,
\begin{equation}
\label{eq:Cheby-var-W}
\text{Var}(W)
\le
  \frac{n}{2k} \cdot \beta +
  \frac{n}{2k} \cdot
    \Big( \frac{n}{2k}-1 \Big) \cdot 2\beta
\le
  \frac{n^2}{2 k^2} \cdot \beta.
\end{equation}
Plugging \eqref{eq:Cheby-squared-Exp-W} and \eqref{eq:Cheby-var-W} into
\eqref{e:Cheby} gives
\begin{equation*}
\P(W-\E[W]\le - \E[W]/2) \le \frac{8 \beta}{1-2\beta} \le 10\beta,
\end{equation*}
provided that $\beta \le \frac{1}{10}$. Thus,
$n_{\text{good}} \ge W \ge \E[W]/2 \ge (1-\beta)\frac{n}{4k}$ with probability
at least $1 - 10\beta$.

\subsubsection{Proof of Lemma~\ref{lem:num-bad-neighbors-bound}}
\label{sec:proof-num-bad-neighbors-bound}

We begin with a preliminary lemma that upper-bounds the probability of two
users of different types being declared as neighbors.

\begin{lemma}
\label{lem:diff-source-pairwise-error}
Let users $u$ and $v$ be of different types, and suppose that they have rated
at least $\Gamma_0$ items in common via joint exploration. Further suppose
$\gamma$-incoherence is satisfied for $\gamma \in [0,1)$. 
If
$\theta \ge 4\gamma\Delta^2$, then
\[
\P(\text{users }u\text{ and }v\text{ are declared to be neighbors})
\le
  \exp\Big( -\frac{(\theta-4\gamma\Delta^2)^2}{2}\Gamma_0 \Big).
\]
\end{lemma}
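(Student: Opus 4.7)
The plan is to mirror the argument for Lemma~\ref{lem:same-source-pairwise-error} but with the inequalities flipped, since we now want to upper-bound rather than lower-bound the pairwise inner product. First I would condition on the overlap set $\Omega \subseteq [m]$ with $|\Omega| = \Gamma_0$ of jointly-explored items. Because users $u$ and $v$ belong to different types, their ratings are independent across users, so $\langle \widetilde{Y}_u, \widetilde{Y}_v\rangle = \sum_{i\in\Omega}\widetilde{Y}_{ui}\widetilde{Y}_{vi}$ is a sum of $\Gamma_0$ independent terms lying in $[-1,1]$, each with mean $(2p_{ui}-1)(2p_{vi}-1)$. Writing $E_\Omega = \sum_{i \in \Omega}(2p_{ui}-1)(2p_{vi}-1)$ and assuming $E_\Omega \le 4\gamma\Delta^2\Gamma_0$ (addressed below), Hoeffding's inequality gives, conditional on $\Omega$,
\[
\P\bigl(\langle \widetilde{Y}_u,\widetilde{Y}_v\rangle \ge \theta\Gamma_0 \mid \Omega\bigr) \le \exp\Bigl(-\frac{(\theta\Gamma_0 - E_\Omega)^2}{2\Gamma_0}\Bigr) \le \exp\Bigl(-\frac{(\theta - 4\gamma\Delta^2)^2\Gamma_0}{2}\Bigr),
\]
where $\theta\Gamma_0 \ge E_\Omega$ holds by the assumption $\theta \ge 4\gamma\Delta^2$.

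To lift from $|\Omega| = \Gamma_0$ to $|\Omega| \ge \Gamma_0$, I would reuse the telescoping manipulation at the end of the same-source lemma's proof: write $\P(\text{declared neighbors}\mid \Gamma_{uv}\ge\Gamma_0)$ as a convex combination, over each $\ell\ge\Gamma_0$, of conditional probabilities $\P(\cdot\mid \Gamma_{uv}=\ell)$; since the exponential $\exp(-(\theta - 4\gamma\Delta^2)^2\ell/2)$ is monotone decreasing in $\ell$, it is maximized at $\ell=\Gamma_0$ and the convex combination collapses to the claimed bound.

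The main obstacle is the step $E_\Omega \le 4\gamma\Delta^2\Gamma_0$. The $\gamma$-incoherence assumption controls only the average over all $m$ items, $\frac{1}{m}\sum_{i=1}^m(2p_{ui}-1)(2p_{vi}-1)\le 4\gamma\Delta^2$, and not the average over the specific subset $\Omega$ of jointly-explored items. However, since items are jointly explored in the order of a uniformly random permutation $\sigma$, $\Omega$ is a uniformly random size-$\Gamma_0$ subset of $[m]$, so $E_\Omega/\Gamma_0$ concentrates around its expectation $\frac{1}{m}\sum_i(2p_{ui}-1)(2p_{vi}-1)\le 4\gamma\Delta^2$ by a Hoeffding-type bound for sampling without replacement applied to the deterministic values $a_i=(2p_{ui}-1)(2p_{vi}-1)\in[-1,1]$. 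Combining this concentration with the conditional-on-$\Omega$ Hoeffding bound on ratings via a union bound---or, equivalently, running a single Azuma--Hoeffding argument on the exchangeable sum $\sum_{j=1}^{\Gamma_0}\widetilde{Y}_{u,\sigma(j)}\widetilde{Y}_{v,\sigma(j)}$ with martingale differences bounded by $1$---yields the claimed exponential bound, with any constant loss absorbable into the leading factor.
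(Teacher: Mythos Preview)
Your proposal is correct and rests on the same key insight as the paper: because the jointly explored items are the first $\Gamma_0$ entries of a uniformly random permutation $\sigma$, the overlap set $\Omega$ is a uniformly random subset of $[m]$, so the relevant inner product concentrates around the \emph{full-population} average that $\gamma$-incoherence actually controls.

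The only real difference is packaging. You condition on $\Omega$, apply Hoeffding over the rating noise to concentrate around $E_\Omega$, and then invoke a second Hoeffding (sampling without replacement on the deterministic values $a_i=(2p_{ui}-1)(2p_{vi}-1)$) to control $E_\Omega$ itself. The paper instead applies Hoeffding in one shot to the sum $\sum_{j\le\Gamma_0} Y_{u,\sigma(j)}Y_{v,\sigma(j)}$, treating the summands as bounded in $[-1,1]$ and ``drawn without replacement from a population of all possible items,'' and using the \emph{marginal} expectation
\[
\E\!\left[\tfrac{1}{\Gamma_0}\langle\widetilde{Y}_u,\widetilde{Y}_v\rangle\right]=\E\!\left[\tfrac{1}{m}\langle Y_u,Y_v\rangle\right]\le 4\gamma\Delta^2,
\]
which holds precisely because $\sigma$ is random. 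This single-step application recovers the stated exponent $(\theta-4\gamma\Delta^2)^2\Gamma_0/2$ exactly; your union-bound route would lose a constant factor in the exponent (and, contrary to your last remark, there is no leading prefactor in the lemma to absorb it into---though this is immaterial for the downstream $\Theta(\cdot)$ bounds in Theorem~\ref{thm:main}). Your Azuma--Hoeffding alternative on the exchangeable sum is the closest match to what the paper does. The extension from $|\Omega|=\Gamma_0$ to $|\Omega|\ge\Gamma_0$ via the convex-combination argument is identical to the paper's.
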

\begin{proof}
As with the proof of Lemma \ref{lem:same-source-pairwise-error}, we first
analyze the case where users $u$ and $v$ have rated exactly $\Gamma_0$ items
in common. Users $u$ and $v$ are declared to be neighbors if
$\langle \widetilde{Y}_u, \widetilde{Y}_v \rangle
 \ge \theta\Gamma_0$. We now crucially use the fact that joint exploration
chooses these $\Gamma_0$ items as a random subset of the $m$ items. For
our random permutation $\sigma$ of $m$ items, we have
$\langle \widetilde{Y}_u, \widetilde{Y}_v \rangle
 =\sum_{i=1}^{\Gamma_0}
    \widetilde{Y}_{u,\sigma(i)} \widetilde{Y}_{v,\sigma(i)}
 =\sum_{i=1}^{\Gamma_0}
    Y_{u,\sigma(i)}Y_{v,\sigma(i)}$,
which is the sum of terms
$\{Y_{u,\sigma(i)}Y_{v,\sigma(i)}\}_{i=1}^{\Gamma_0}$ that are each bounded
within $[-1,1]$ and drawn without replacement from a population of all
possible items. Hoeffding's inequality (which also applies to the current scenario of sampling without replacement \cite{hoeffding1963}) yields
\begin{align}
\P\big(
    \langle \widetilde{Y}_u, \widetilde{Y}_v \rangle
    \ge \theta\Gamma_0
    \mid
    p_u \ne p_v
  \big)
\le
  \exp\left(
        -\frac{\big(
                 \theta\Gamma_0
                 -
                 \E[\langle
                      \widetilde{Y}_u, \widetilde{Y}_v
                    \rangle
                    \mid p_u \ne p_v]
               \big)^2}{2\Gamma_0}
      \right).
\label{eq:diff-source-inner-prod-hoeffding}
\end{align}
By $\gamma$-incoherence and our choice of
$\theta$,
\begin{equation}
\theta\Gamma_0
-\E\big[
     \langle \widetilde{Y}_u, \widetilde{Y}_v \rangle 
     \mid
     p_u \ne p_v
   \big] 
\ge
  \theta\Gamma_0 - 4\gamma\Delta^2\Gamma_0
=
   (\theta-4\gamma\Delta^2)\Gamma_0
\ge
   0.
\label{eq:diff-source-hoeffding-var}
\end{equation}
Above, we used the fact that $\Gamma_0$ randomly explored items are a 
random subset of $m$ items, and hence 
\begin{align*}
\E\big[\tfrac1\Gamma_0\langle  \widetilde{Y}_u, \widetilde{Y}_v\rangle\big] & =  \E\big[\tfrac1m \langle  {Y}_u, {Y}_v\rangle\big],
\end{align*}
with $Y_u, Y_v$ representing the entire (random) vector of preferences of $u$ and $v$ respectively.

Combining inequalities \eqref{eq:diff-source-inner-prod-hoeffding} and
\eqref{eq:diff-source-hoeffding-var}
yields
\[
\P\big(
    \langle \widetilde{Y}_u, \widetilde{Y}_v \rangle
    \ge \theta\Gamma_0
    \mid
    p_u \ne p_v
  \big)
\le
  \exp\Big( -\frac{(\theta-4\gamma\Delta^2)^2}{2}\Gamma_0 \Big).
\]
A similar argument as the ending of Lemma
\ref{lem:same-source-pairwise-error}'s proof establishes that the bound
holds even if users $u$
and $v$ have jointly explored more than $\Gamma_0$ items.
\end{proof}


We now prove Lemma~\ref{lem:num-bad-neighbors-bound}. 

Let $\beta = \exp(-(\theta-4\gamma\De^2)^2 \Gamma_0/2)$ be the probability
bound from Lemma~\ref{lem:diff-source-pairwise-error}, where by our choice of
$\theta = 2 \Delta^2 (1 + \gamma)$ and with $\Gamma_0=t^{1-\alpha}/2$, we have
$\beta = \exp(-\Delta^4(1-\gamma)^2 t^{1-\alpha})$.

By Lemma~\ref{lem:diff-source-pairwise-error}, for a pair of users $u$ and $v$
with at least $t^{1-\alpha}/2$ items jointly explored, the probability that
they are erroneously declared neighbors is upper-bounded by $\beta$.

Denote the set of users of type different from $u$ by $\BB$, and write
$$
n_\text{bad}
= \sum_{v\in \BB}
    \ind\{ u \text{ and } v \text{ are declared to be neighbors}\},
$$
whence $\E[n_\text{bad}] \le n \beta$. Markov's inequality gives
\[
\P( n_{\text{bad}} \ge n \sqrt{\beta} )
\le \frac{\E[n_{\text{bad}}]}
         {n \sqrt{\beta}}
\le \frac{n \beta}
         {n \sqrt{\beta}}
= \sqrt{\beta}\,,
\]
proving the lemma.

\subsection{Proof of Lemma~\ref{lem:final}}\label{appendix:lemmafinal}

We reproduce Lemma~\ref{lem:final} below.

\newtheorem*{lem-final}{Lemma \ref{lem:final}}
\begin{lem-final}
For user $u$ at time $t$, if the good neighborhood event $\En(u,t)$ holds
and $t \le \mu m$, then
\begin{align*}
\P(X_{ut}=1) 
&\ge
   1
   - 2m\exp\Big(-\frac{\Delta^2 t n^{1-\alpha}}{40km}\Big)
   - \frac{1}{t^\alpha} - \frac{1}{n^\alpha}\,.
\end{align*}
\end{lem-final}

We begin by checking that when the good neighborhood event $\En(u,t)$
holds for user $u$, the items have been rated
enough times by the good neighbors.

\begin{lemma}
\label{lem:item-exploration}
For user $u$ at time $t$, suppose that the good neighborhood event
$\En(u,t)$ holds. Then for a given item $i$,
\[
\P\Big( \text{item }i\text{ has}\le\frac{t n^{1-\alpha}}{10km}
        \text{ ratings from good neighbors of }u\Big)
\le \exp\Big( -\frac{t n^{1-\alpha}}{40km} \Big).
\]
\end{lemma}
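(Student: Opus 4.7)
The plan is to lower-bound $N_{i,u}$, the number of good neighbors of $u$ that have rated item $i$ by time $t$, by isolating the contribution from random exploration (which is essentially independent across users), and then apply a Chernoff bound to an iid binomial sum. Condition on $\En(u,t)$ to fix a set $G$ with $|G|\ge n/(5k)$ good neighbors; each $v\in G$ contributes to $N_{i,u}$ as soon as it rates $i$ by any means, so it suffices to count how many good neighbors pick $i$ via random exploration within the first $t$ time steps.

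For each $v\in G$ and each step $s\le t$, whenever $v$ has not yet rated $i$, the conditional probability (given the history) that $v$ performs random exploration at step $s$ and selects $i$ is $\varepsilon_R(n)/|U_{v,s}|\ge 1/(n^\alpha m)$, using only the uniform upper bound $|U_{v,s}|\le m$ on the unrated set. I would build a standard coupling producing iid Bernoulli$(1/(n^\alpha m))$ variables $\{B_{v,s}\}_{v\in G,\,s\le t}$ such that $B_{v,s}=1$ at any $s$ forces $v$ to have rated $i$ by time $t$ (either via the matched random-exploration step, if $i$ was still unrated, or trivially if $v$ had already rated $i$). Consequently $N_{i,u}\ge \sum_{v\in G}\ind\{\sum_s B_{v,s}\ge 1\}$, an iid binomial sum with per-user success probability $p=1-(1-1/(n^\alpha m))^t$. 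Using the elementary inequality $1-(1-q)^t\ge qt(1-qt/2)$ (valid here since $qt=t/(n^\alpha m)\le \mu\le 1$) together with $|G|\ge n/(5k)$, the mean of the sum satisfies $|G|p\gtrsim tn^{1-\alpha}/(km)$, and a multiplicative Chernoff bound delivers an exponentially small lower tail of the required order.

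The primary obstacle is matching the precise constants $1/10$ and $1/40$ in the statement: the distinct-user bound of the previous paragraph produces a Chernoff mean only of order $tn^{1-\alpha}/(km)$, whereas the claim demands roughly twice this mean to get the exponent $1/4$. The workaround is to apply Chernoff instead to the raw sum $Z=\sum_{v,s}B_{v,s}\sim \mathrm{Bin}(|G|t,\,1/(n^\alpha m))$, whose mean is $\ge 2\cdot tn^{1-\alpha}/(10km)$, and then separately show that in the relevant regime $t/(n^\alpha m)\ll 1$ per-user overcounting (events $\sum_s B_{v,s}\ge 2$) is negligible, so that $Z$ coincides with $\sum_v \ind\{\sum_s B_{v,s}\ge 1\}$ with high probability; combining the two yields the stated $\exp(-tn^{1-\alpha}/(40km))$. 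The coupling itself is unproblematic because the bound $|U_{v,s}|\le m$ holds regardless of history, so the dependencies introduced by joint exploration and exploitation do not interfere with the per-step probability lower bound.
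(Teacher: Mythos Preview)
Your approach matches the paper's: lower-bound the count by a binomial coming from random exploration and apply a multiplicative Chernoff bound at half the mean. The paper's proof is a single line --- it simply asserts that the count stochastically dominates $\text{Bin}\big(\tfrac{n}{5k},\,\tfrac{\varepsilon_R(n)\,t}{m}\big)$ and plugs into Chernoff, without constructing any coupling or distinguishing $tq$ from $1-(1-q)^t$. So the constant obstacle you worry about is one the paper simply sidesteps by (informally) taking the per-user success probability to be exactly $tq$; your raw-sum detour is not needed to reproduce the paper's argument.

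There is, however, a genuine gap in your coupling. In \textsc{Collaborative-Greedy} the decision to randomly explore at step $s$ is made by a \emph{single} coin $C_s$ shared across all users, so the events ``$v$ selects $i$ at step $s$ via random exploration'' for distinct $v\in G$ are not independent --- they all contain the common factor $\{C_s=1\}$. Consequently an \emph{iid} family $\{B_{v,s}\}$ with the domination property you state cannot exist: already with two users and one step,
\[
\P(\text{neither rates }i)\;\ge\;(1-\varepsilon_R)+\varepsilon_R\Big(1-\tfrac{1}{m}\Big)^2
\;>\;
\Big(1-\tfrac{\varepsilon_R}{m}\Big)^2,
\]
the right side being the iid value. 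This breaks both your per-user binomial $\text{Bin}(|G|,1-(1-q)^t)$ and the raw sum $Z\sim\text{Bin}(|G|t,q)$. The repair is routine but two-staged: first show by Chernoff that $|R|=\sum_s C_s$ is at least $t/(2n^\alpha)$ with the required probability, and then, \emph{conditionally on} $|R|$, exploit the genuine independence of the per-user item draws to dominate the count by $\text{Bin}\big(|G|,\,1-(1-1/m)^{|R|}\big)$ and apply Chernoff again. The paper's one-line assertion glosses over this same dependence.
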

\begin{proof}
The number of user $u$'s good neighbors who have rated item $i$ stochastically
dominates a $\text{Bin}( \frac{n}{5k}, \frac{\varepsilon_R(n)t}{m} )$ random
variable, where $\frac{\varepsilon_R(n)t}{m}=\frac{t}{m n^\alpha}$
(here, we have critically used the lower bound on the number of good neighbors
user $u$ has when the good neighborhood event $\En(u,t)$ holds). By a Chernoff
bound,
\[
\P\bigg(
    \text{Bin}\Big(
                \frac{n}{5k},
                \frac{t}{m n^\alpha}
              \Big)
    \le \frac{tn^{1-\alpha}}{10km}
  \bigg)
\le
  \exp\bigg(
        -\frac{1}{2}
        \frac{(\frac{t n^{1-\alpha}}{5km}-\frac{t n^{1-\alpha}}{10km})^2}
              {\frac{t n^{1-\alpha}}{5km}}
      \bigg)
\le
   \exp\Big( -\frac{t n^{1-\alpha}}{40km} \Big). \qedhere
\]
\end{proof}

Next, we show a sufficient condition for which the algorithm correctly
classifies every item as likable or unlikable for user $u$.

\begin{lemma}
\label{lem:item-classification}
Suppose that there are no $\Delta$-ambiguous items. For user $u$ at time
$t$, suppose that the good neighborhood event $\En(u,t)$ holds. Provided that
every item $i\in[m]$ has more than $\frac{t n^{1-\alpha}}{10km}$ ratings from
good neighbors of user $u$, then with probability at least
$1-m\exp(-\frac{\Delta^2 t n^{1-\alpha}}{20km})$, we have that for every item
$i\in[m]$,
\begin{align*}
\widetilde{p}_{ui}
&> \frac{1}{2}\quad\text{if item }i\text{ is likable by user }u, \\
\widetilde{p}_{ui}
&< \frac{1}{2}\quad\text{if item }i\text{ is unlikable by user }u.
\end{align*}
\end{lemma}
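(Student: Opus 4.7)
The plan is to fix an item $i$, bound the probability that user $u$ misclassifies it, and then union-bound over the $m$ items. Let $N^G_i$ (resp.\ $N^B_i$) denote the number of good (resp.\ bad) neighbors of $u$ that have rated item $i$ by time $t$, and put $n_i = N^G_i + N^B_i$. The hypothesis gives $N^G_i > t n^{1-\alpha}/(10km)$; the definition of $\En(u,t)$ caps the total number of bad neighbors of $u$ at $\Delta t n^{1-\alpha}/(10km)$, so $N^B_i \le \Delta t n^{1-\alpha}/(10km) < \Delta N^G_i$. Consequently $n_i \le (1+\Delta) N^G_i$ and $n_i/(2 N^G_i) \le 1/2 + \Delta/2$.

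Write $\widetilde{p}_{ui} = (S^G_i + S^B_i)/n_i$, where $S^G_i$ and $S^B_i$ count the number of $+1$ ratings for item $i$ contributed by good and bad neighbors, respectively. Conditional on which users have rated item $i$ by time $t$, the rating values $\{Y_{vi}\}$ are independent Bernoulli (on $\{-1,+1\}$) with $\P(Y_{vi}=+1)=p_{vi}$, since each rating is drawn fresh at consumption time, independently of the recommender's selection mechanism. In particular, $S^G_i$ is $\text{Bin}(N^G_i, p_{ui})$, because all good neighbors share $u$'s preference vector.

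Suppose item $i$ is likable, so $p_{ui} \ge 1/2 + \Delta$. Since $0 \le S^B_i \le N^B_i$, the adversarial choice for $\widetilde{p}_{ui}$ being small is $S^B_i = 0$, giving
\[
\{\widetilde{p}_{ui}\le 1/2\}\;\subseteq\;\{S^G_i/N^G_i\le n_i/(2N^G_i)\}\;\subseteq\;\{S^G_i/N^G_i\le 1/2+\Delta/2\}.
\]
Since $\E[S^G_i/N^G_i] = p_{ui} \ge 1/2+\Delta$, this is a downward deviation of at least $\Delta/2$ from the mean, so Hoeffding's inequality yields
\[
\P\bigl(S^G_i/N^G_i \le 1/2+\Delta/2\bigr)\;\le\;\exp\bigl(-\Delta^2 N^G_i/2\bigr)\;\le\;\exp\bigl(-\Delta^2 t n^{1-\alpha}/(20km)\bigr).
\]
The unlikable case is entirely symmetric (use the worst case $S^B_i = N^B_i$, which again leaves a gap of $\Delta/2$). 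A union bound over the $m$ items produces the stated probability.

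The main subtlety, rather than a real obstacle, is the independence claim needed to invoke Hoeffding: although the identity of users who have rated item $i$ is determined adaptively based on earlier ratings, each rating value is drawn from an independent Bernoulli$(p_{vi})$ at the moment of consumption, so conditioning on the set of raters preserves independence of the rating values themselves. Everything else reduces to the deterministic gap calculation $(1/2 + \Delta) - (1/2 + \Delta/2) = \Delta/2$, a routine application of Hoeffding, and a union bound over items.
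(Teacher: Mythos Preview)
Your proposal is correct and follows essentially the same argument as the paper: worst-case the bad neighbors' contributions (the paper phrases this as $\widetilde{p}_{ui}$ stochastically dominating $q_{ui}=\text{Bin}(a_0,p_{ui})/((1+\Delta)a_0)$, which is exactly your ``set $S^B_i=0$'' step), observe that the remaining gap for the good neighbors' empirical mean is $\Delta/2$, apply Hoeffding to get $\exp(-\Delta^2 N^G_i/2)$, and union-bound over items. Your explicit remark about conditioning on the set of raters before invoking Hoeffding is a point the paper leaves implicit.
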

\begin{proof}
Let $A$ be the number of ratings that good neighbors of user $u$ have
provided. 
Suppose item $i$ is likable by user
$u$. Then when we condition on
$A = a_0 \triangleq \lceil \frac{t n^{1-\alpha}}{10km} \rceil$,
$\widetilde{p}_{ui}$ stochastically dominates
\[
q_{ui}
\triangleq
  \frac{\text{Bin}(a_0, p_{ui})}{a_0 + \Delta a_0}
= \frac{\text{Bin}(a_0, p_{ui})}{(1 + \Delta) a_0},
\]
which is the worst-case variant of $\widetilde{p}_{ui}$ that insists that all
$\Delta a_0$ bad neighbors provided rating ``$-1$'' for likable item $i$
(here, we have critically used the upper bound on the number of bad neighbors
user $u$ has when the good neighborhood event $\En(u,t)$ holds).
Then
\begin{align*}
\P( q_{ui} \le \frac{1}{2} \mid A=a_0 )
&=\P\bigg(
      \text{Bin}(a_0, p_{ui})
      \le
        \frac{(1+\Delta)a_0}{2}
      \;\bigg|\;
      A=a_0
    \bigg) \\
&=\P\bigg(
      a_0 p_{ui} - \text{Bin}(a_0, p_{ui})
      \ge
        a_0 \Big(
              p_{ui} - \frac{1}{2} - \frac{\Delta}{2}
            \Big)
      \;\bigg|\;
      A=a_0
    \bigg) \\
&\overset{(i)}{\le}
   \exp\Big(
         -2a_0 \Big(p_{ui} - \frac{1}{2} - \frac{\Delta}{2}\Big)^2
       \Big) \\
&\overset{(ii)}{\le}
   \exp\Big( -\frac{1}{2} a_0 \Delta^2 \Big) \\
&\overset{(iii)}{\le}
   \exp\Big( -\frac{\Delta^2 t n^{1-\alpha}}{20km} \Big),
\end{align*}
where step $(i)$ is Hoeffding's inequality, step $(ii)$ follows from
item $i$ being likable by user $u$ (i.e., $p_{ui} \ge \frac{1}{2}+\Delta$),
and step $(iii)$ is by our choice of $a_0$. Conclude then that
\[
\P(\widetilde{p}_{ui} \le \frac{1}{2} \mid A=a_0)
\le
  \exp\Big( -\frac{\Delta^2 t n^{1-\alpha}}{20km} \Big).
\]
Finally,
\begin{align*}
\P\Big(
    \widetilde{p}_{ui} \le \frac{1}{2}
    \;\Big|\;
    A \ge \frac{tn^{1-\alpha}}{10km}
  \Big)
&=\frac{\sum_{a=a_0}^{\infty}
          \P(A=a)\P(\widetilde{p}_{ui} \le \frac{1}{2} \mid A=a)}
       {\P( A\ge\frac{t n^{1-\alpha}}{10km} )} \\
&\le
  \frac{\sum_{a=a_0}^{\infty}
          \P(A=a)\exp( -\frac{\Delta^2 t n^{1-\alpha}}{20km})}
       {\P(A \ge \frac{t n^{1-\alpha}}{10km})} \\
&=\exp\Big( -\frac{\Delta^2 t n^{1-\alpha}}{20km} \Big).
\end{align*}
A similar argument holds for when item $i$ is unlikable. Union-bounding
over all $m$ items yields the claim.
\end{proof}

We now prove Lemma \ref{lem:final}. First off, provided that $t \le \mu m$, we
know that there must still exist an item likable by user $u$ that user $u$
has yet to consume. For user $u$ at time $t$, supposing that event $\En(u,t)$
holds, then every item has been rated more than $\frac{t n^{1-\alpha}}{10km}$
times by the good neighbors of user $u$ with probability at least
$1 - m\exp(-\frac{t n^{1-\alpha}}{40km})$. This follows from
union-bounding over the $m$ items with Lemma~\ref{lem:item-exploration}.
Applying Lemma~\ref{lem:item-classification}, and noting that we only exploit
with probability
$1-\varepsilon_J(t)-\varepsilon_R(n)=1-1/t^\alpha-1/n^\alpha$,
we finish the proof:
\begin{align*}
\P(X_{ut}=1)
&\ge
   1 - m \exp\Big(-\frac{t n^{1-\alpha}}{40km}\Big)
   - m \exp\Big(-\frac{\Delta^2 t n^{1-\alpha}}{20km} \Big)
   - \frac{1}{t^\alpha} - \frac{1}{n^\alpha} \\
&\ge
   1
   - 2m \exp\Big(-\frac{\Delta^2 t n^{1-\alpha}}{40km} \Big)
   - \frac{1}{t^\alpha} - \frac{1}{n^\alpha}.
\end{align*}

\subsection{Experimental Results}
\label{sec:appendix-experimental-results}

We demonstrate our algorithm
\textsc{Collaborative-Greedy} on
two datasets, showing that they have comparable performance and that they
both outperform two existing recommendation algorithms Popularity Amongst
Friends (PAF) \cite{paf} and Deshpande and Montanari's method (DM)
\cite{deshpande_montanari}. At each time step, PAF finds nearest neighbors
(``friends'') for every user and recommends to a user the ``most popular''
item, i.e., the one with the most number of $+1$ ratings, among the user's
friends. DM doesn't do any collaboration beyond a preprocessing step that
computes item feature vectors via matrix completion. Then during online
recommendation, DM learns user feature vectors over time with the help of
item feature vectors and recommends an item to each user based on whether it
aligns well with the user's feature vector.

We simulate an online recommendation system based on movie ratings from the
Movielens10m and Netflix datasets, each of which provides a sparsely filled
user-by-movie rating matrix with ratings out of 5 stars. Unfortunately,
existing collaborative filtering datasets such as the two we consider don't
offer the interactivity of a real online recommendation system, nor do they
allow us to reveal the rating for an item that a user didn't actually rate.
For simulating an online system, the former issue can be dealt with by
simply revealing entries in the user-by-item rating matrix over time. We
address the latter issue by only considering a dense ``top users vs.~top
items'' subset of each dataset. In particular, we consider only the ``top''
users who have rated the most number of items, and the ``top'' items that have
received the most number of ratings. While this dense part of the dataset is
unrepresentative of the rest of the dataset, it does allow us to use actual
ratings provided by users without synthesizing any ratings.

An initial question to ask is whether the dense movie ratings matrices we
consider could be reasonably explained by our latent source model. We
automatically learn the structure of these matrices using the method by
Grosse et al.~\cite{rgrosse_best_paper_2012} and find Bayesian clustered
tensor factorization (BCTF) to accurately model the data. This finding isn't
surprising as BCTF has previously been used to model movie ratings
data~\cite{bctf_2009}. BCTF effectively clusters both users and movies so
that we get structure such as that shown in Figure
\ref{fig:experimental-results}\subref{fig:bctf} for the
Movielens10m ``top users vs.~top items'' matrix. Our latent source model
could reasonably model movie ratings data as it only assumes clustering of
users.


Following the experimental setup of \cite{paf},
we quantize a rating of 4 stars or more to be $+1$ (likeable), and a rating
of 3 stars or less to be $-1$ (unlikeable). While we look at a dense subset
of each dataset, there are still missing entries. If a user $u$ hasn't rated
item $j$ in the dataset, then we set the corresponding true rating to 0,
meaning that in our simulation, upon recommending item $j$ to user $u$, we
receive 0 reward, but we still mark that user $u$ has consumed item $j$; thus,
item $j$ can no longer be recommended to user $u$. For both Movielens10m and
Netflix datasets, we consider the top $n=200$ users and the top $m=500$
movies. For Movielens10m, the resulting user-by-rating matrix has 80.7\%
nonzero entries. For Netflix, the resulting matrix has 86.0\% nonzero entries.
For an algorithm that recommends item $\pi_{ut}$ to user $u$ at time $t$, we
measure the algorithm's average cumulative reward up to time $T$ as
\begin{equation*}
\frac{1}{n} \sum_{t=1}^T\sum_{u=1}^n Y_{u \pi_{ut}}^{(T)},
\end{equation*}
where we average over users.

For all methods, we recommend items until we reach time $T=500$, 
i.e., we make movie recommendations until each user
has seen all $m=500$ movies. 
We disallow the matrix completion step for DM to see the users that we
actually test on, but we allow it to see the the same items as what is in the
simulated online recommendation system in order to compute these items'
feature vectors (using the rest of the users in the dataset). Furthermore,
when a rating is revealed, we provide DM both the thresholded rating and the
non-thresholded rating, the latter of which DM uses to estimate user feature
vectors over time.

Parameters $\theta$ and $\alpha$ for 
and \textsc{Collaborative-Greedy} are chosen using training data: We sweep over
the two parameters on training data consisting of 200 users that are the
``next top'' 200 users, i.e., ranked 201 to 400 in number movie ratings they
provided. For simplicity, we discretize our search space to
$\theta\in\{0.0, 0.1, \dots, 1.0\}$ and
$\alpha\in\{0.1, 0.2, 0.3, 0.4, 0.5\}$. We choose the parameter setting
achieving the highest area under the cumulative reward curve. For both
Movielens10m and Netflix datasets, this corresponded to setting
$\theta=0.0$ and $\alpha=0.5$ for \textsc{Collaborative-Greedy}. In contrast,
the parameters for PAF and DM are chosen to be the best parameters for the
test data among a wide range of parameters. The results are shown in Figure
\ref{fig:supp-results-top}. We find that our algorithm
\textsc{Collaborative-Greedy} 
outperforms PAF and DM. We remark that the curves
are roughly concave, which is expected since once we've finished recommending
likeable items (roughly around time step 300), we end up recommending mostly
unlikeable items until we've exhausted all the items.

\begin{figure}
\captionsetup[subfloat]{farskip=0em,captionskip=-.4em,nearskip=-5em}
\noindent
\centering
\subfloat[]{
\includegraphics[width=7.5cm]{figs/nips_resubmit_movielens10m_top.pdf}
} \\
\subfloat[]{
\includegraphics[width=7.5cm]{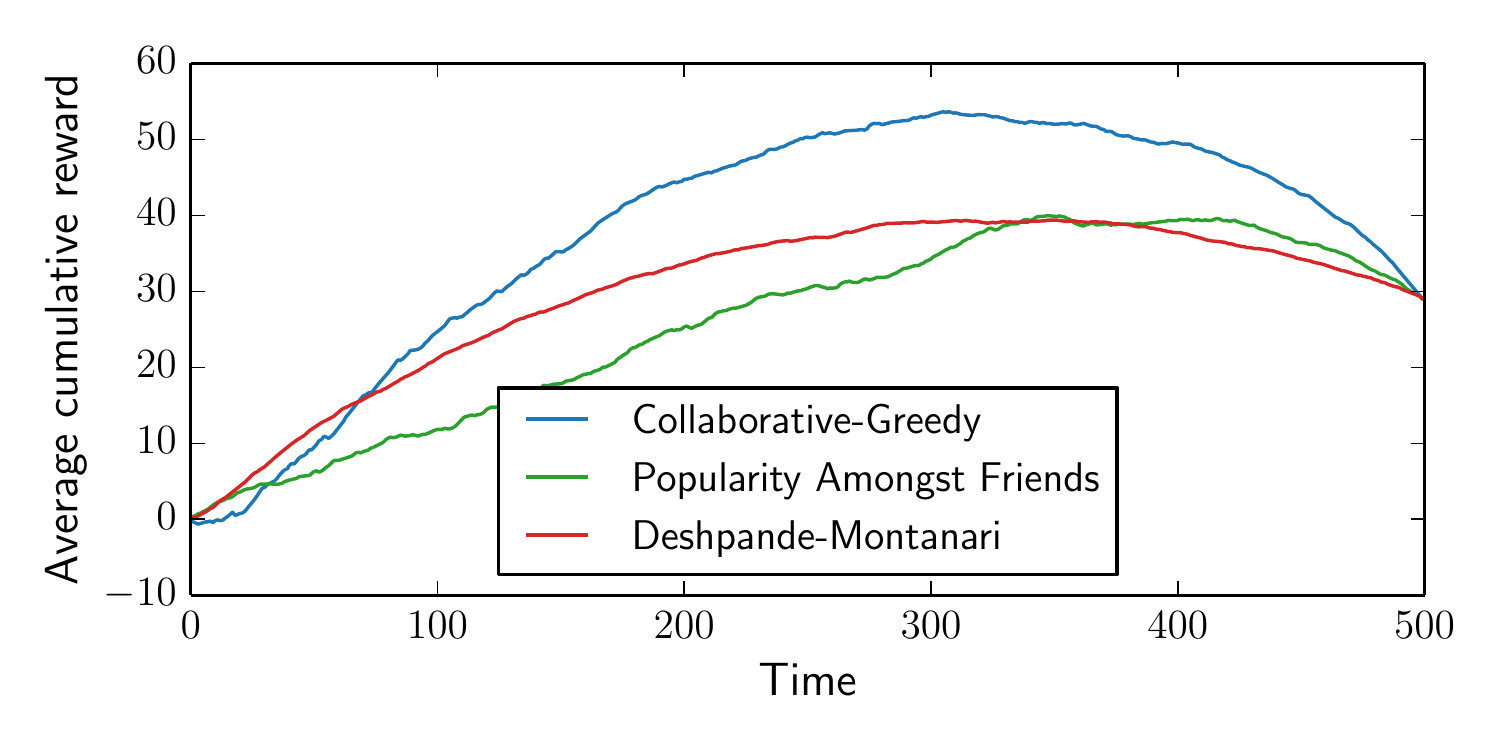}
}
\caption{Average cumulative rewards over time: 
(a) Movielens10m, (b) Netflix.}
\label{fig:supp-results-top}
\end{figure}

\end{document}